%% file: main.tex
\documentclass[10pt,journal,compsoc]{IEEEtran}
\usepackage[nocompress]{cite}
\usepackage{url}            
\usepackage{booktabs}       
\usepackage{amsfonts}       
\usepackage{nicefrac}       
\usepackage{microtype}      
\usepackage{xcolor}         
\usepackage{epsfig}
\usepackage{graphicx}
\usepackage{amsmath}
\usepackage{amssymb}
\usepackage{multirow}
\usepackage{color, colortbl}
\usepackage{subcaption}
\usepackage{algorithm,algpseudocode}
\usepackage{pifont}

\usepackage{xspace}

\usepackage{comment}
\usepackage{bm}
\usepackage{wrapfig}
\usepackage[pagebackref=true,breaklinks=true,colorlinks,bookmarks=false]{hyperref}
\usepackage{algorithm}
\usepackage{algpseudocode}

\newcommand{\keypoint}[1]{\noindent\textbf{#1}\quad}
\makeatletter
\DeclareRobustCommand\onedot{\futurelet\@let@token\@onedot}
\def\@onedot{\ifx\@let@token.\else.\null\fi\xspace}

\makeatother

\definecolor{blue1}{rgb}{0.8,0.9,1}   
\definecolor{blue2}{rgb}{0.6,0.8,1}
\definecolor{blue3}{rgb}{0.4,0.6,1}
\definecolor{blue4}{rgb}{0.2,0.4,1}
\definecolor{blue5}{rgb}{0.0,0.2,1}   

\newtheorem{lemma}{Lemma}
\newtheorem{proof}{Proof}[section]

\begin{document}
\title{From Misclassifications to Outliers: \\Joint Reliability Assessment in Classification}

\author{Yang~Li, Youyang~Sha, Yinzhi~Wang, Timothy~Hospedales, Xi~Shen, Shell~Xu~Hu, Xuanlong~Yu \IEEEcompsocitemizethanks{ \IEEEcompsocthanksitem Yang~Li, Youyang~Sha, Yinzhi~Wang, Xi~Shen, and Xuanlong~Yu are with Intellindust AI Lab, Shenzhen, 518000, China. 
\protect\\ {\sloppy E-mail: \{liyang, shayouyang, yingzhiwang, shenxi, yuxuanlong\}@intellindust.com } \IEEEcompsocthanksitem Shell~Xu~Hu and Timothy~Hospedales are with Samsung AI Center, Cambridge, CB1 2GA, United Kingdom. \protect\\ E-mail: \{shell.hu, t.hospedales\}@samsung.com} }

\markboth{Submitted to IEEE Transactions on Pattern Analysis and Machine Intelligence}%
{Li \MakeLowercase{\textit{et al.}}: 
From Misclassifications to Outliers: Joint Reliability Assessment in Classification}

\input{sections/0_abstract}
\input{sections/1_introduction}
\input{sections/2_background}

\input{sections/3_method_1}

\input{sections/4_method_2}
\input{sections/5_exp}

\input{sections/7_futurework}

\input{sections/8_conclusion}
\input{sections/9_acknowledge}


\bibliographystyle{ieeetr}
\bibliography{citation}

\clearpage
\input{supp}


\end{document}

%% file: sections/0_abstract.tex
\IEEEtitleabstractindextext{%
\begin{abstract}
Building reliable classifiers is a fundamental challenge for deploying machine learning in real-world applications. A reliable system should not only detect out-of-distribution (OOD) inputs but also anticipate in-distribution (ID) errors by assigning low confidence to potentially misclassified samples. Yet, most prior work treats OOD detection and failure prediction as separated problems, overlooking their closed connection. We argue that reliability requires evaluating them jointly. To this end, we propose a unified evaluation framework that integrates OOD detection and failure prediction, quantified by our new metrics DS-F1 and DS-AURC, where DS denotes double scoring functions. Experiments on the OpenOOD benchmark show that double scoring functions yield classifiers that are substantially more reliable than traditional single scoring approaches. Our analysis further reveals that OOD-based approaches provide notable gains under simple or far-OOD shifts, but only marginal benefits under more challenging near-OOD conditions. Beyond evaluation, we extend the reliable classifier SURE and introduce SURE+, a new approach that significantly improves reliability across diverse scenarios. Together, our framework, metrics, and method establish a new benchmark for trustworthy classification and offer practical guidance for deploying robust models in real-world settings. The source code is publicly available at \url{https://github.com/Intellindust-AI-Lab/SUREPlus}.
\end{abstract}

\begin{IEEEkeywords}
Selective Classification, Out-of-Distribution Detection, Failure Prediction, AI Safety
\end{IEEEkeywords}}

\maketitle
\IEEEdisplaynontitleabstractindextext
\IEEEpeerreviewmaketitle

%% file: sections/1_introduction.tex
\IEEEraisesectionheading{\section{Introduction}}
\label{sec:introduction}
\IEEEPARstart{D}{}eploying machine learning classifiers in safety-critical domains such as fire and smoke detection demands model robustness beyond high benchmark accuracy. In real-world environments, a reliable system must not only detect actual fire and smoke, but also avoid false alarms caused by visually similar yet harmless phenomena (e.g., fog, steam, or unusual lighting). Equally important, the system should be able to recognize its own uncertainty and refrain from making overconfident misclassifications. Failures in either direction--failing to detect a true fire or generating frequent false alarms--can lead to severe consequences, ranging from safety risks to a loss of user trust. Ensuring reliability under these conditions is therefore a fundamental and urgent challenge.

The research community has actively explored two directions that are highly related to reliability. The first is out-of-distribution (OOD) detection~\cite{yang2022openood,zhang2023openood,yang2024generalized}, which aims to identify inputs that deviate from the training distribution and should not be trusted. The second is failure prediction~\cite{corbiere2019addressing,zhu2022rethinking}, which estimates whether a classifier’s prediction on an in-distribution (ID) sample is correct.
Both directions have established benchmarks, specialized algorithms, and thriving research communities. However, most existing works study these two aspects separately, treating them as independent problems.

\input{figure1}

In real-world applications, a classifier must handle both in-distribution (ID) and out-of-distribution (OOD) inputs. This creates the challenge of jointly addressing OOD detection and failure prediction, so that model reliability can be assessed in a unified way. Suppose the model produces two scores for each input: \textit{i) an OOD detection score $s_{\rm OOD}$, and ii) an in-distribution confidence score $s_{\rm ID}$.} Together, these scores form a binary decision system:
\begin{itemize}
    \item First, we ask: Is the input ID or OOD? If $s_{\rm OOD}$ exceeds a threshold $\tau_{\rm OOD}$, the input is accepted as ID.
    \item Next, if the input is ID, we ask: Can we trust the predicted label? This is determined by applying another threshold $\tau_{\rm ID}$ on $s_{\rm ID}$.
\end{itemize}

Addressing both dimensions at once is challenging. A related direction is selective classification~\cite{geifman2017selective}, which evaluates both ID accuracy and OOD rejection within a unified framework~\cite{xia2022augmenting}. However, most selective classification methods rely on a \textbf{single scoring function} with one decision threshold to accept or reject predictions. This approach does not fully take advantage of the specialized progress made in the OOD detection and failure prediction communities.

The challenge of evaluation has already been emphasized in prior studies. For instance, the OpenOOD benchmark~\cite{yang2022openood,zhang2023openood} shows that no single method consistently excels across all metrics, complicating the choice of a training strategy or post-hoc approach for practical use. While this issue arises in the context of OOD detection alone, it becomes even more complex when ID accuracy is also considered, since OOD detection and failure prediction are typically evaluated in isolation. Such fragmented evaluations lead to inconsistencies and hinder reliable deployment. What is needed instead is a unified metric that more intuitively captures performance on the joint task.

To illustrate the tension between ID reliability and OOD performance, we train two ResNet-18~\cite{he2016deep} classifiers on CIFAR-100~\cite{krizhevsky2009learning}: one using standard cross-entropy loss (CE) and the other with CutMix~\cite{yun2019cutmix}. We then evaluate both models on CIFAR-100 (ID) and MNIST (OOD)~\cite{lecun1998mnist}. As shown in the first three columns of Figure~\ref{fig:ce_regmixup}, CutMix slightly improves ID accuracy and boosts OOD detection performance according to Area Under the Receiver Operating Characteristic Curve (AUROC) metric and using KLM~\cite{basart2022scaling} as the scoring function for $s_\text{OOD}$,
but it decreases reliability on ID samples according to Area Under the Risk-Coverage curve (AURC) metric on the ID test set when taking MSP~\cite{hendrycks2016baseline} for $s_\text{ID}$.
This highlights a common dilemma: both ID and OOD performance are crucial for building a reliable classifier in real-world settings. Focusing on only ID accuracy or OOD detection can be misleading, making it difficult to determine which model is truly dependable.


To address this challenge, we propose a unified framework that treats OOD detection and failure prediction as complementary aspects of classifier reliability. Rather than evaluating these tasks separately, we assess them jointly using two scoring functions and their corresponding thresholds: $\tau_{\rm OOD}$ for detecting OOD samples and $\tau_{\rm ID}$ for assessing ID classification confidence. This double-scoring approach divides all samples into four categories: \textit{True Accept}, \textit{True Reject}, \textit{False Accept}, and \textit{False Reject}. For example, True Accept refers to ID samples that pass both thresholds and are correctly classified by the model. Using these categories, we extend standard metrics such as AURC and F1 to account for both ID and OOD data, resulting in DS-AURC and DS-F1 (DS = Double Scoring). In particular, DS-F1 searches for the best F1 score across the double-scoring surface, as illustrated in Figure~\ref{fig:surface}. These metrics give a more accurate measure of a classifier’s reliability and help identify truly robust models, avoiding misleading conclusions that arise when considering only one aspect in isolation. As shown in the last two columns of Figure~\ref{fig:ce_regmixup}, when considering simultaneously KLM and MSP as $s_\text{OOD}$ and $s_\text{ID}$, our DS-F1 and DS-AURC metrics show that CE achieves a higher DS-F1 and a lower DS-AURC, indicating better overall performance when evaluating ID and OOD performance together.

We show that for double scoring functions, which jointly consider the OOD score ($s_{\rm OOD}$) and the ID confidence ($s_{\rm ID}$), the evaluation metrics naturally reduce to their single-score counterparts when only one score is used. Concretely, DS-AURC is lower bounded by the standard AURC, while DS-F1 is guaranteed to be no worse than the best F1 obtained from a single scoring function. This ensures that double scoring at least matches single-score methods while providing a more faithful measure of classifier reliability.
To validate this, we conduct extensive experiments on OpenOOD~\cite{yang2022openood}, showing that double scoring consistently produces more robust and reliable classifiers. 

Interestingly, post-hoc OOD scores ($s_{\rm OOD}$) complement MSP ($s_{\rm ID}$) effectively under far-OOD shifts, yet provide little additional benefit under near-OOD conditions, highlighting the limits of relying solely on advanced post-hoc OOD detection approaches.

Beyond the unified evaluation metric, we extend the reliable classifier SURE~\cite{li2024sure}, originally focused on failure prediction, to handle both ID and OOD scenarios. Building on this, we introduce SURE+, a streamlined and more powerful version that incorporates recent advances in both OOD detection and failure prediction. With this unified and simplified design, SURE+ achieves significantly higher reliability when evaluated on both ID and OOD samples.

Our key contributions are as follows:
\begin{itemize}
\item We reveal that \emph{OOD detection} and \emph{failure prediction}, though often studied in isolation, are inherently complementary aspects of classifier reliability. Evaluating them separately can lead to misleading conclusions, whereas a unified perspective provides a more faithful reflection of real-world deployment needs.
\item To this end, we introduce two complementary metrics, \textbf{DS-F1} and \textbf{DS-AURC}, which jointly measure a classifier’s ability to detect OOD inputs and anticipate its own misclassifications. These metrics offer a principled and comprehensive way to evaluate model reliability.
\item Extensive experiments on the OpenOOD benchmark demonstrate that our unified framework consistently identifies classifiers that are substantially more robust and trustworthy than those optimized for OOD detection or failure prediction alone.
\item Beyond evaluation, we propose \textbf{SURE+}, an improved reliable classifier that builds on SURE by integrating recent advances in both OOD detection and failure prediction. Experiments show that SURE+ achieves state-of-the-art reliability across diverse scenarios.
\end{itemize}

Code will be released upon publication.

%% file: figure1.tex
\begin{figure*}[t]
\centering
\begin{subfigure}[b]{0.48\linewidth}
    \centering
    \includegraphics[width=\linewidth]{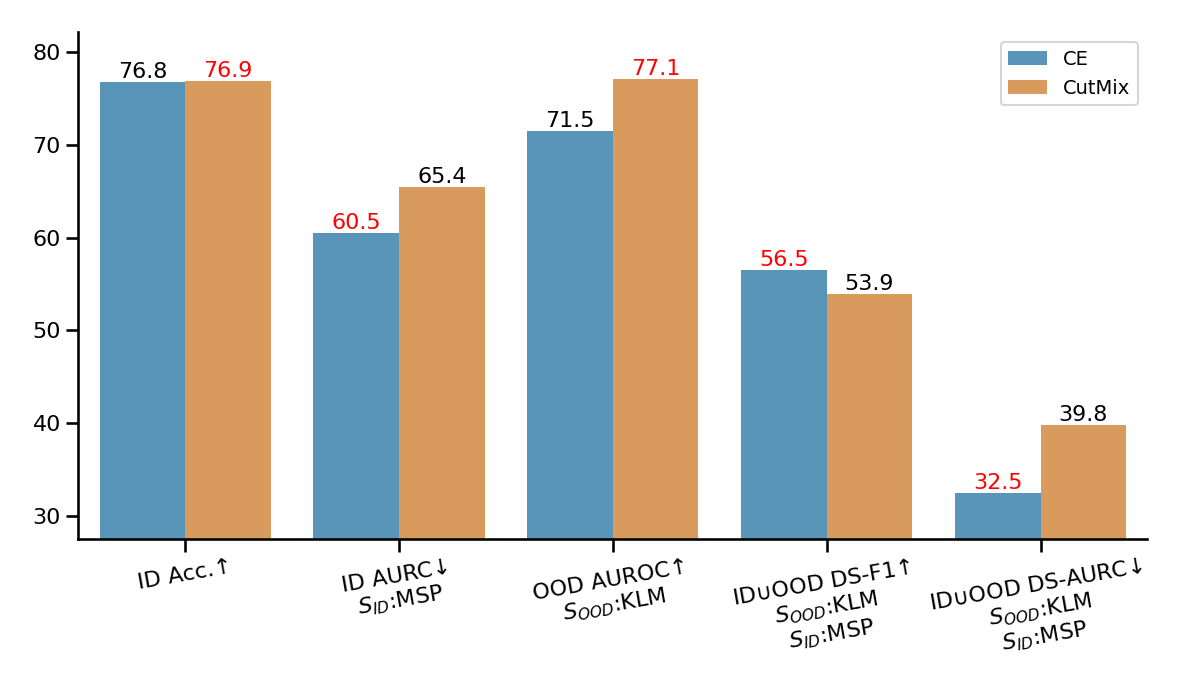}
    \caption{CE vs. CutMix}
    \label{fig:ce_regmixup}
\end{subfigure}
\hfill
\begin{subfigure}[b]{0.48\linewidth}
    \centering
    \includegraphics[width=\linewidth]{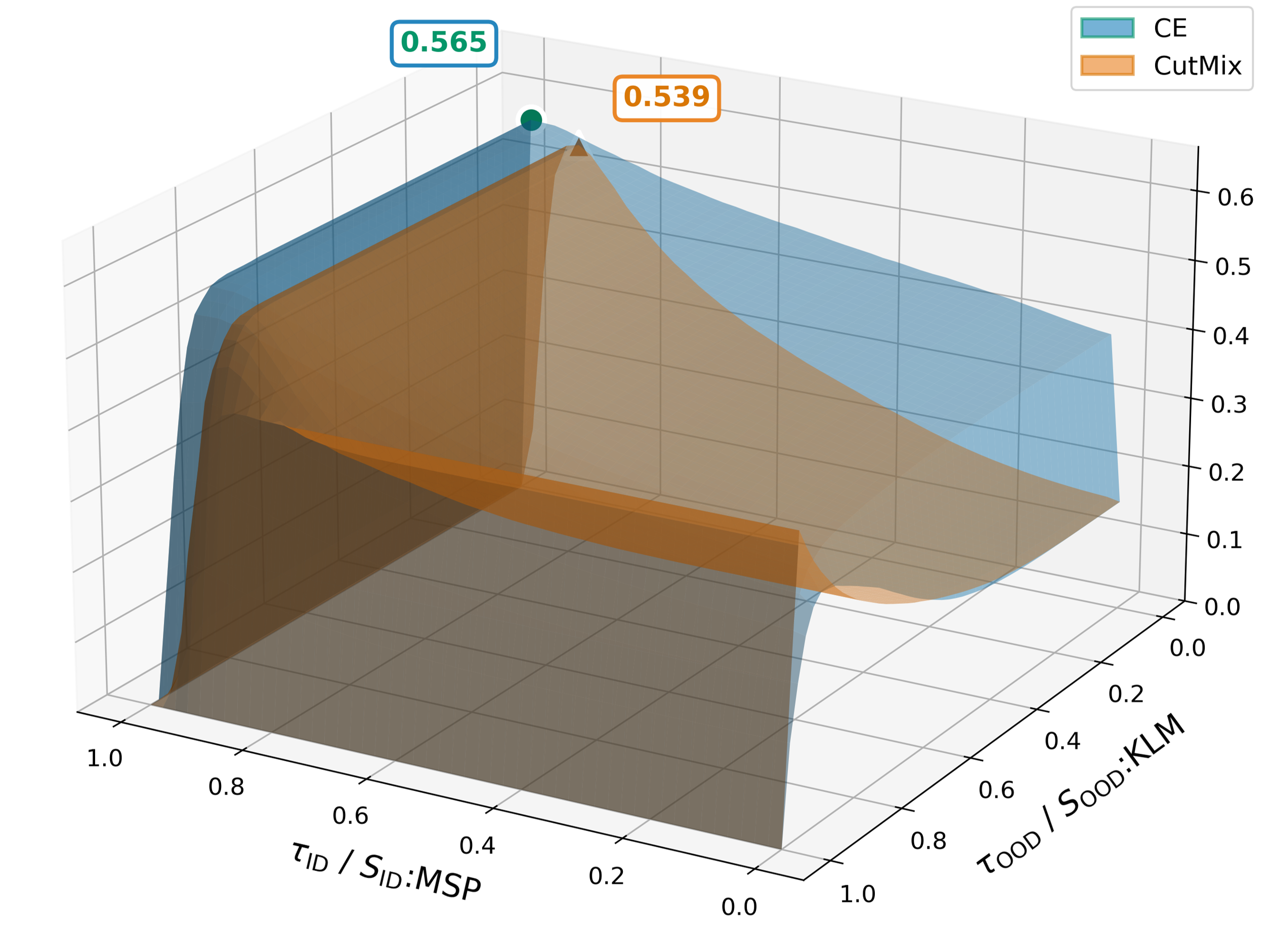}
    \caption{F1 vs. threshold of ID $\tau_{\rm ID}$ and OOD $\tau_{\rm OOD}$}
    \label{fig:surface}
\end{subfigure}
\caption{\textbf{Joint evaluation of ID and OOD.} a) Two ResNet-18 models were trained with cross-entropy {(CE)} and CutMix separately on CIFAR-100. For ID metrics (the first two columns, on CIFAR-100), CE outperforms CutMix; while for OOD detection (the third column, on MNIST), CutMix is better. Given this dilemma, the previous evaluation systems would struggle to give a sensible ranking between these two methods. Our new  metrics (DS-F1 and DS-AURC, the last two columns) confirm that CE works better than CutMix in a joint ID and OOD evaluation, which has an intuitive explaination in b) that the F1 score surface of CE is above the surface of CutMix for most $(\tau_{\rm ID}, \tau_{\rm OOD})$ pairs, including their best F1 scores.}
\label{fig:reliability_concept}
\end{figure*}

%% file: sections/2_background.tex
\section{Related work}
\subsection{Out-of-distribution detection}
Out-of-distribution (OOD) detection has been widely studied for enhancing the robustness and reliability of models. Existing methods can be broadly divided into post-hoc and training-based methods. Post-hoc methods, such as 
MSP~\cite{hendrycks2016baseline}, Energy~\cite{liu2020energy}, GradNorm~\cite{huang2021importance}, aim to leverage signals already contained in a trained model to distinguish OOD from ID inputs, whether from the model's logits, intermediate features, or their gradients. The training-based methods explicitly modify the learning process, for example, by incorporating auxiliary outliers~\cite{hendrycks2018deep,zhu2023openmix}, masking ID inputs~\cite{li2023rethinking}, or applying regularization losses~\cite{ming2022exploit}, to make models more sensitive to OOD data. Despite their differences, these approaches share the common goal of providing a reliable scoring function, which, combined with a threshold, filters out OOD inputs and prevents the system from mistakenly trusting predictions on them.

\subsection{Failure prediction and selective classification}
Beyond detecting distributional novelty, a key challenge is to predict whether a model’s output is correct. Post-hoc scoring functions introduced for OOD detection can also be applied to failure prediction, e.g. MSP~\cite{hendrycks2016baseline}, while some works exploit intermediate representations of a trained model and train auxiliary modules for estimating prediction errors~\cite{corbiere2019addressing,luo2021learning,yu2021slurp,shen2023post}. SIRC~\cite{xia2022augmenting} adopts a selective classification perspective~\cite{kim2023unified,geifman2017selective}, combining multiple post-hoc scores to filter unreliable predictions and better separate system failures from reliable outputs. However, its evaluation still treats ID and OOD data separately, despite the method's aim to address both. Evaluating a classification system under joint ID and OOD inputs is more realistic but under-explored, as current metrics largely consider these dimensions independently, motivating the need for unified system-level evaluation.

\subsection{Training strategies for reliable classifiers} 
Post-hoc scoring methods rely on pre-trained models, and their effectiveness is closely tied to the underlying training scheme. Data augmentation strategies such as Mixup~\cite{thulasidasan2019mixup} and RegMixup~\cite{pinto2022using} enhance generalization and OOD detection. Optimization techniques like SAM~\cite{foret2020sharpness} and F-SAM~\cite{li2024friendly} encourage flat minima, while weight averaging approaches like SWA~\cite{izmailov2018averaging} and SWAG~\cite{maddox2019simple} improve stability and calibration. More recent pipelines such as FMFP~\cite{zhu2023revisiting} and SURE~\cite{li2024sure} integrate these ideas, achieving strong performance in failure prediction and robustness. Yet, the behavior of SURE under OOD exposure remains underexplored. In this work, we introduce SURE+, an enhanced version designed to improve reliability and robustness in realistic scenarios where both ID and OOD samples are present.

\subsection{Existing evaluation metrics for classification system}
Metrics for failure prediction and OOD detection are typically based on the selective classification principle, where a scoring function and threshold decide whether to trust a prediction. They can be grouped into single-threshold metrics, such as AUROC, AUPR~\cite{hendrycks2016baseline,yang2022openood,zhang2023openood}, FPR@95, and F1 score, which evaluate performance at a fixed threshold, and multi-threshold metrics, such as AURC~\cite{geifman2018biasreduced} and recently proposed AUGRC~\cite{traub2024overcoming}, which aggregate performance across thresholds to assess risk over coverage.

Recognizing the limitations of isolated evaluations, prior works~\cite{jaeger2022call,bernhardt2022failure,kim2023unified,narasimhan2024plugin} have advocated for the necessity of testing classification systems in mixed ID and OOD settings. The motivation of these works aligns closely with this work. These studies argue that a truly reliable system must simultaneously handle both misclassifications of known classes and the presence of unknown outliers~\cite{jaeger2022call,kim2023unified}. For instance, Jaeger et al.~\cite{kim2023unified} and Bernhardt et al.~\cite{bernhardt2022failure} systematically reveal that improvements in OOD detection do not necessarily translate to better failure detection, calling for a holistic evaluation of all potential risk sources.

However, these works~\cite{jaeger2022call,bernhardt2022failure,kim2023unified,narasimhan2024plugin} still implicitly assume that a single scoring rule suffices for building a reliable system. This technical constraint naturally leads to methodologies focused on merely strengthening or modifying a single post-hoc signal to find a compromise between ID and OOD tasks~\cite{xia2022augmenting,narasimhan2024plugin}. In practice, the objective of accepting correct ID samples while rejecting both ID failures and OOD samples on a joint evaluation set is difficult to achieve with a single scalar. This calls for a more comprehensive evaluation paradigm where two scoring functions with corresponding thresholds are more effective than one. To this end, we propose DS-F1 and DS-AURC, which naturally extend selective classification by incorporating double scoring to better capture and evaluate system reliability.

%% file: sections/3_method_1.tex
\section{DS-F1 and DS-AURC: Novel Evaluation Metrics}
We here consider that double scoring functions perform OOD detection and failure prediction, each has its own threshold, and the functions could share the same post-hoc method. This matches more in real-world settings, for instance, one can use MSP score~\cite{hendrycks2016baseline} for both OOD and failure prediction, but can also apply Energy score~\cite{liu2020energy} for OOD detection and MSP score for failure prediction.
This motivates us to extend the existing single-rule framework into a double-rule setting, 
leading to our proposed DS-F1 and DS-AURC metrics. 

Let $\mathcal{D} = \mathcal{D}_{\text{ID}} \cup \mathcal{D}_{\text{OOD}}$ be the evaluation dataset, comprising both ID and OOD samples, with $N$ samples in total. For an input $x$, a model $m$ produces a prediction $m(x)$. We then have two scoring functions based on $m$:
\begin{enumerate}
    \item An \textbf{OOD detection score} $s_\text{OOD}(x) \in \mathbb{R}$, where higher values indicate a higher likelihood of the sample being ID.
    \item A \textbf{failure prediction score} $s_\text{ID}(x) \in \mathbb{R}$, where higher values indicate higher likelihood in the prediction $m(x)$ being correct.
\end{enumerate}
The decision to accept a prediction is based on two corresponding thresholds, $\tau_\text{OOD}$ and $\tau_\text{ID} \in \mathbb{R}$.

\subsection{DS-F1}

The classical F1-score balances precision and recall for a single binary decision rule. We extend this concept to our double-scoring setting to find the optimal joint operating point of the system. The goal is to \textbf{accept correctly classified ID samples} while rejecting both OOD samples and misclassified ID samples.

Specifically, for any threshold pair $(\tau_\text{OOD},\tau_\text{ID})$, the acceptance set is
\begin{equation}
\mathcal{A}(\tau_g,\tau_h) = \{ i : s_\text{OOD}(x_i)\ge \tau_\text{OOD} \wedge s_\text{ID}(x_i)\ge \tau_\text{ID} \}.
\label{eq:setA}
\end{equation}
To compute the F1-score and avoid confusion with standard "true positive / false negative" terminology, we adopt an acceptance-oriented notation. We define:

\keypoint{True Accept (TA):}
ID samples that are accepted and correctly classified.
    
\begin{equation*}
        \text{TA}(\tau_\text{OOD}, \tau_\text{ID}) = 
        \sum_{i \in \mathcal{D}_{\text{ID}}} 
        \mathbb{I}(i \in \mathcal{A}(\tau_\text{OOD}, \tau_\text{ID})) \cdot \mathbb{I}(m(x_i) = y_i)
\end{equation*}
\keypoint{False Accept (FA):}
    Samples that are accepted but not correct, including accepted OOD samples and misclassified ID samples.
    
    \begin{equation*}
\begin{aligned}
    \text{FA}(\tau_\text{OOD}, &\tau_\text{ID}) = 
    \underbrace{\sum_{\substack{i \in \mathcal{D}_{\text{OOD}}}} 
    \mathbb{I}\big(i \in \mathcal{A}(\tau_\text{OOD}, \tau_\text{ID})\big)}_{\text{FA from accepted OOD}}
    \\&+
    \underbrace{\sum_{\substack{i \in \mathcal{D}_{\text{ID}}}} 
    \mathbb{I}\big(i \in \mathcal{A}(\tau_\text{OOD}, \tau_\text{ID})\big) 
    \cdot \mathbb{I}\big(m(x_i) \neq y_i\big)}_{\text{FA from accepted misclassified ID}}
\end{aligned}
    \end{equation*}
    
\keypoint{False Reject (FR):}
    ID samples that are not correctly accepted, either because they are rejected or misclassified when accepted.
    
    \begin{equation*}
\begin{aligned}
\text{FR}(\tau_\text{OOD}, \tau_\text{ID})
&= \sum_{i \in \mathcal{D}_{\text{ID}}}
\Biggl[
\underbrace{\mathbb{I}\big(i \notin \mathcal{A}(\tau_\text{OOD}, \tau_\text{ID})\big)}_{\text{Rejected ID}} \\
&+
\underbrace{\mathbb{I}\big(i \in \mathcal{A}(\tau_\text{OOD}, \tau_\text{ID})\big)
\cdot \mathbb{I}\big(m(x_i) \neq y_i\big)}_{\text{Misclassified ID}}
\Biggr]
\end{aligned}
\end{equation*}


Note that misclassified but accepted ID samples are counted in both FA and FR: they are ID but not correctly predicted (hence FR), and are also wrongly accepted (hence FA). This overlap is inherent in the double-scoring system, but does not lead to inconsistency.
Indeed, by construction, we have
\[
\text{TA} + \text{FR} = |\mathcal{D}_{\text{ID}}|, 
\qquad 
\text{TA} + \text{FA} = |\mathcal{A}(\tau_\text{OOD},\tau_\text{ID})|,
\]
so the resulting definitions of precision and recall are equivalent to their standard forms and can be expressed more compactly as
\begin{equation}
\begin{aligned}
\text{Precision}(\tau_\text{OOD},\tau_\text{ID}) &= \frac{\text{TA}(\tau_\text{OOD},\tau_\text{ID})}{|\mathcal{A}(\tau_\text{OOD},\tau_\text{ID})|}, 
\\
\text{Recall}(\tau_\text{OOD},\tau_\text{ID}) &= \frac{\text{TA}(\tau_\text{OOD},\tau_\text{ID})}{|\mathcal{D}_{\text{ID}}|}.
\end{aligned}
\end{equation}
The \textbf{DS-F1} is then defined as the maximum achievable F1-score across all possible threshold pairs, capturing the best possible system performance:
\begin{equation}
    \text{DS-F1} = \max_{\tau_\text{OOD}, \tau_\text{ID}} \left( \frac{2 \cdot \text{Precision}(\tau_\text{OOD}, \tau_\text{ID}) \cdot \text{Recall}(\tau_\text{OOD}, \tau_\text{ID})}{\text{Precision}(\tau_\text{OOD}, \tau_\text{ID}) + \text{Recall}(\tau_\text{OOD}, \tau_\text{ID})} \right)
    \label{eq:dual-f1}
\end{equation}
This formulation directly captures the best achievable joint operating point when both scoring functions are applied simultaneously, and serves as a point-wise evaluation metric complementary to area-based ones.  
As shown in Figure~\ref{fig:surface}, unlike the classical way to achieve the F1 score, which searches over a single threshold each time according to a single scoring method, DS-F1 searches over threshold pairs, thereby generalizing the decision boundary to a two-dimensional space.

\subsection{DS-AURC}
While DS-F1 identifies the best operating point, we also need to evaluate the system's performance across the entire range of possible operating points. 
To achieve this, we generalize the AURC metric and extend the notion of multi-threshold evaluation to the double-scoring setting. 

\input{figure2}

\keypoint{Coverage.}
Given an acceptance rule induced by a threshold $\tau$ of a scoring function $s$, according to Eq.~\ref{eq:setA}, the coverage $u$ is defined as the proportion of accepted samples ID samples by the system:
\begin{equation}
u = \frac{|\mathcal{A}(\tau)\cap \mathcal{D}_{\text{ID}}|}{|\mathcal{D}_{\text{ID}}|},
\label{eq:coverage}
\end{equation}
This definition is consistent with the ID-only case, since when $\mathcal{D}_{\text{OOD}}=\varnothing$, Eq.~\ref{eq:coverage} reduces to the standard coverage definition in selective classification.

\keypoint{AURC on ID-only test set.}
Let $Z = \mathbb{I}(m(x)\neq y)$ denote the indicator for misclassification. We denote $\text{SelectiveRisk}$ by $\text{SR}$ for brevity. For an evaluation set containing only ID data, the selective risk at threshold $\tau$ and the resulting AURC are defined as
\begin{equation}
\label{eq:aurc-id}
\begin{aligned}
\text{SR}(\tau) &= \frac{\sum_{i\in\mathcal{D}_{\text{ID}}} Z_i\cdot \mathbb{I}(i\in\mathcal{A}(\tau))}{|\mathcal{A}(\tau)|}, \\
\text{AURC} &= \int_0^1 \text{SelectiveRisk}(\tau(u))\, du,
\end{aligned}
\end{equation}
where the ratio is defined as $0$ when $|\mathcal{A}(\tau)|=0$, and $\tau(u)$ is the unique threshold corresponding to coverage $u$ under a single scoring function. 

\keypoint{AURC on ID+OOD evaluation set.}
When we consider a more realistic case, the evaluation set contains both ID and OOD data, and $\mathcal{A}(\tau)$ may include OOD samples. 
In this case, the selective risk must be redefined to reflect the accuracy of the ID portion of accepted samples:
\begin{equation}
\text{SR}(\tau) 
= \frac{\sum_{i\in\mathcal{D}_{\text{ID}}} Z_i\cdot \mathbb{I}(i\in\mathcal{A}(\tau)) \;+\; |\mathcal{A}(\tau)\cap\mathcal{D}_{\text{OOD}}|}
       {|\mathcal{A}(\tau)|}
\label{eq:selrisk-id-ood}
\end{equation}
This definition treats not only accepted misclassified ID as risks but also each accepted OOD sample, since once the system accepts a prediction given by an OOD input, it indicates a failure. Note that when $\mathcal{D}_{\text{OOD}}=\varnothing$, Eq.~\ref{eq:selrisk-id-ood} reduces to the selective risk in Eq.~\ref{eq:aurc-id}, so this definition is compatible with the ID-only testing case. Meanwhile, the calculation of AURC will be the same as in Eq.~\ref{eq:aurc-id}.

\keypoint{DS-AURC.}
We still consider the ID+OOD evaluation setting here. When two scoring functions $s_\text{OOD}$ and $s_\text{ID}$ are jointly applied, a given coverage level $u$ may correspond to multiple threshold pairs $(\tau_\text{OOD},\tau_\text{ID})$, 
each inducing a distinct acceptance set $\mathcal{A}(\tau_\text{OOD},\tau_\text{ID})$. 
We define the selective risk for a threshold pair similar to Eq.~\ref{eq:selrisk-id-ood} as follows:
\begin{equation}
\begin{aligned}
&\text{SR}(\tau_\text{OOD},\tau_\text{ID})=\\ 
&\frac{\sum_{i\in\mathcal{D}_{\text{ID}}} Z_i\cdot \mathbb{I}(i\in\mathcal{A}(\tau_{\text{OOD}},\tau_{\text{ID}}))
       \;+\; |\mathcal{A}(\tau_{\text{OOD}},\tau_{\text{ID}})\cap\mathcal{D}_{\text{OOD}}|}
       {|\mathcal{A}(\tau_{\text{OOD}},\tau_{\text{ID}})|}
\label{eq:selrisk-dual}
\end{aligned}
\end{equation}
Since at a fixed coverage $u$ there may exist multiple threshold pairs, the selective risk is no longer unique but a set of values. 
With Eq.~\ref{eq:selrisk-dual}, we therefore define the DS-AURC as
\begin{equation}
\label{eq:dual-aurc-prob}
\begin{aligned}
\text{DS-AURC} \text{=} \int_0^1 \phi \Big\{\text{SR}(\tau_\text{OOD},\tau_\text{ID}) \text{:}(\tau_\text{OOD},\tau_\text{ID}) \in \mathbb{R}^2, \\
\frac{|\mathcal{A}(\tau_\text{OOD},\tau_\text{ID})\cap \mathcal{D}_{\text{ID}}|}{|\mathcal{D}_{\text{ID}}|} \text{=} u \Big\} \, du
\end{aligned}
\end{equation}
\noindent where $\phi$ is an aggregation operator over risks associated with the same coverage level, and the definition of the coverage here is also consistent with Eq.~\ref{eq:coverage}. In practice, AUC is often computed via the trapezoidal rule, which by default takes the last item when multiple risks share the same coverage. We instead set $\phi=\min$ to capture the best achievable risk at each coverage level. Consequently, the DS-AURC reflects the smallest possible coverage–risk area, highlighting the optimistic bound of model performance under selective prediction.

\keypoint{Summary and properties of DS-F1 and DS-AURC.}
The proposed DS-F1 and DS-AURC metrics provide a principled generalization of the classical single-threshold evaluation pipeline. 
Their detailed implementation, including pseudo-code for computing DS-F1 and DS-AURC, is provided in Appendix~\ref{section:supp_implementation}.
Meanwhile, as formally shown in Appendix~\ref{section:supp_proof}, DS-F1 is guaranteed to be at least as high as the standard F1, and DS-AURC is guaranteed to be lower than or equal to the standard AURC. This ensures that adopting double scoring never worsens evaluation outcomes. 

These metrics exhibit several notable properties. First, they naturally generalize the traditional metrics: when one threshold is fixed, DS-F1 and DS-AURC reduce to the standard F1 and AURC, maintaining full consistency with previous pipelines. Second, our metrics provide clear performance guarantees: DS-F1 is always greater than or equal to F1, and DS-AURC is always less than or equal to AURC. Third, double scoring can change the ranking of methods. For example, a model with the lowest single-score AURC may not achieve the lowest DS-AURC, since DS-AURC evaluates multiple threshold combinations at each coverage level. As shown in Figure~\ref{fig:aurc_min}, for each coverage bin, DS-AURC selects the minimal risk (\textcolor{orange}{orange} points), resulting in a risk surface that is always equal to or better than that of single scoring.

Taken together, DS-F1 and DS-AURC capture complementary but fundamentally different aspects of reliability. 
DS-F1 reflects the best achievable operating point under an optimal threshold configuration, whereas DS-AURC evaluates the model’s consistency across different levels of selectivity by aggregating performance over multiple threshold pairs. 
As a result, a method may achieve strong peak performance while failing to maintain low risk when operating conditions change, a behavior that is explicitly exposed by DS-AURC but obscured by single-threshold evaluation. 
This distinction highlights the necessity of jointly considering both point-wise optimality and global robustness when evaluating systems that must handle OOD detection and failure prediction in tandem.

\input{tables/table1_posthoc}

%% file: figure2.tex
\begin{figure*}[t]
\centering
\includegraphics[width=0.8\linewidth]{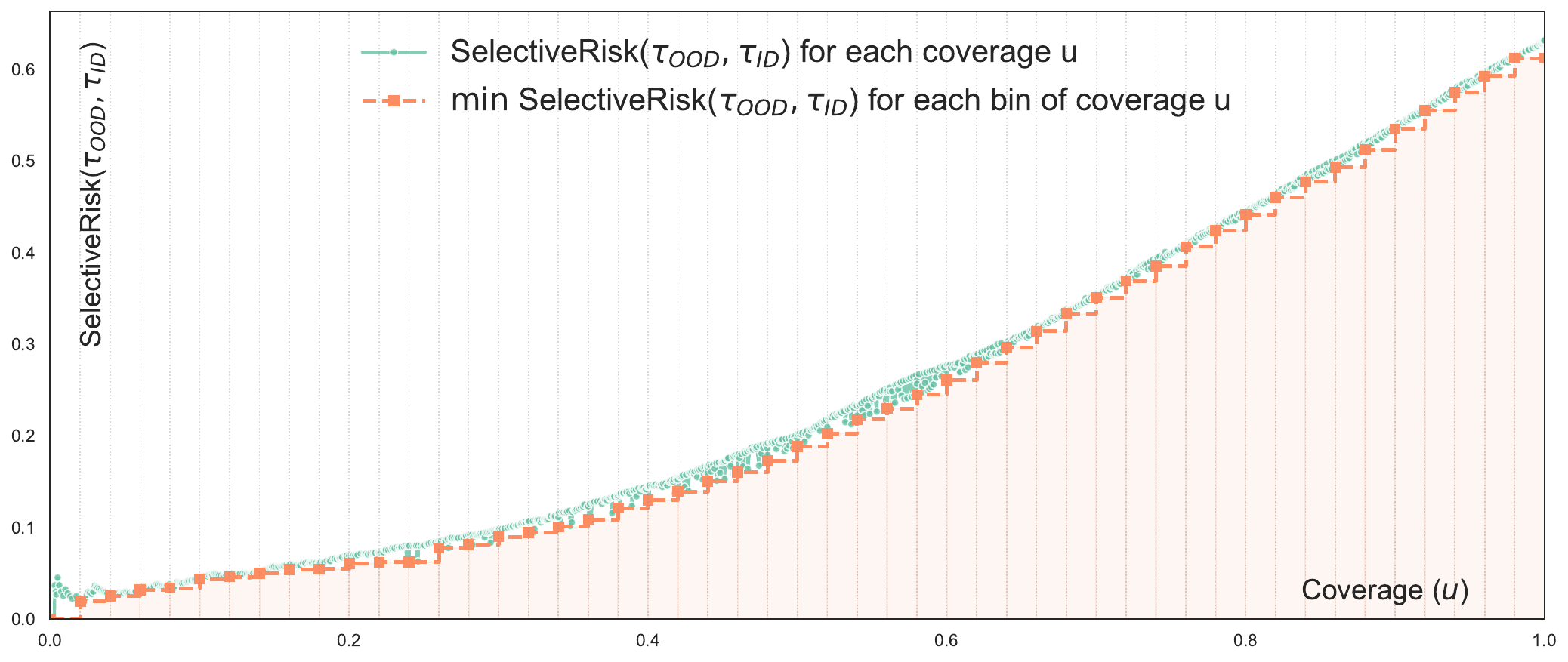}
\caption{\textbf{Example of DS-AURC.} Selective risk versus coverage for double scoring is shown in \textcolor{green}{green}. DS-AURC uses the best risk in each coverage bin (\textcolor{orange}{orange}, in Eq.~\ref{eq:dual-aurc-prob}), resulting in a lower (better) AURC. Results with ResNet-18 on CIFAR-100 (ID) and CIFAR-10 (OOD), using VIM~\cite{wang2022vim} for the OOD score and MSP~\cite{hendrycks2016baseline} for the ID score.}
\label{fig:aurc_min}
\end{figure*}

%% file: tables/table1_posthoc.tex
\begin{table*}[t]
\centering
\scalebox{0.9}{
\begin{tabular}{l||c|c|c|c||l|l|l|l} 
\toprule
\multirow{3}{*}{\textbf{Method}} & \multicolumn{4}{l||}{\begin{tabular}[c]{@{}l@{}}\textbf{Model}: ResNet18~\\\textbf{Training/Validation set}: CIFAR100 training set\\\textbf{Evaluation set}: CIFAR100 test set + Near/Far OOD sets\end{tabular}} & \multicolumn{4}{l}{\begin{tabular}[c]{@{}l@{}}\textbf{Model}: DeiT-B\\ \textbf{Training/Validation set}: ImageNet1K training set\\\textbf{Evaluation set}: ImageNet1K test set + Near/Far OOD sets\end{tabular}} \\ 
\cmidrule{2-9}
 & \multicolumn{2}{c|}{\textbf{Single Scoring}} & \multicolumn{2}{c||}{\textbf{Double Scoring}} & \multicolumn{2}{c|}{\textbf{Single Scoring}} & \multicolumn{2}{c}{\textbf{Double Scoring}} \\ 
\cmidrule{2-9}
 & \textbf{F1} $\uparrow$ & \textbf{AURC} $\downarrow$ & \textbf{DS-F1} $\uparrow$ & \textbf{DS-AURC} $\downarrow$ & \multicolumn{1}{c|}{\textbf{F1} $\uparrow$ } & \multicolumn{1}{c|}{\textbf{AURC} $\downarrow$} & \multicolumn{1}{c|}{\textbf{DS-F1} $\uparrow$} & \multicolumn{1}{c}{\textbf{DS-AURC} $\downarrow$} \\ 
\midrule
MSP~\cite{hendrycks2016baseline} & \textbf{\textcolor{blue5}{67.42}} / 57.03 & 202.59 / 368.07 & 67.42 / 57.03 & 202.38 / 367.56 & \textbf{\textcolor{blue5}{70.55}} / \textbf{\textcolor{blue4}{79.05}} & \textbf{\textcolor{blue5}{241.44}} / \textbf{\textcolor{blue4}{98.15}} & 70.55 / 79.05 & 241.35 / 98.13 \\
OpenMax~\cite{bendale2016towards} & 64.96 / 55.41 & 264.21 / 444.53 & 67.42 / 57.09 & 201.78 / 365.87 & 67.06 / 77.73 & 404.98 / 248.49 & \textbf{\textcolor{blue5}{70.70}} / \textbf{\textcolor{blue5}{79.60}} & 239.42 / \textbf{\textcolor{blue1}{93.32}} \\
ODIN~\cite{liang2017enhancing} &  65.89 / 56.54 & 219.63 / 358.68 & 67.42 / \textbf{\textcolor{blue3}{58.75}} & 199.82 / \textbf{\textcolor{blue3}{330.49}} & 67.48 / 76.21 & 322.35 / 192.30 & 70.55 / 79.05 & 239.54 / 97.16 \\
MDS~\cite{lee2018simple} & 56.19 / 47.82 & 339.54 / 464.67 & 67.43 / 57.22 & 201.78 / 355.07 & 68.60 / 78.12 & 252.45 / \textbf{\textcolor{blue3}{99.23}} & \textbf{\textcolor{blue3}{70.56}} / \textbf{\textcolor{blue3}{79.26}} & \textbf{\textcolor{blue5}{226.20}} / \textbf{\textcolor{blue5}{77.30}} \\
Gram~\cite{hendrycks2021unsolved} & 53.23 / 46.09 & 536.12 / 548.61 & 67.42 / \textbf{\textcolor{blue5}{59.16}} & 201.14 / \textbf{\textcolor{blue5}{313.08}} & 64.87 / 73.50 & 488.21 / 424.83 & 70.55 / 79.05 & 240.89 / 97.43 \\
EBO~\cite{liu2020energy} & 66.62 / \textbf{\textcolor{blue2}{57.41}} & 205.79 / \textbf{\textcolor{blue4}{345.83}} & 67.42 / 57.76 & 197.99 / 342.04 & 66.92 / 76.20 & 355.94 / 214.35 & 70.55 / 79.05 & 240.78 / 97.78 \\
GradNorm~\cite{huang2021importance} & 63.31 / 47.95 & 322.48 / 518.60 & \textbf{\textcolor{blue3}{67.45}} / 57.05 & 201.08 / 365.74 & 64.87 / 73.51 & 532.10 / 454.47 & 70.55 / 79.05 & 240.78 / 97.78 \\
ReAct~\cite{sun2021react} & 66.32 / \textbf{\textcolor{blue5}{58.03}} & 213.53 / \textbf{\textcolor{blue5}{335.99}} & 67.43 / \textbf{\textcolor{blue2}{58.44}} & 199.61 / \textbf{\textcolor{blue2}{331.40}} & 67.99 / 77.67 & 290.84 / 131.86 & \textbf{\textcolor{blue2}{70.56}} / 79.05 & \textbf{\textcolor{blue1}{239.29}} / 93.36 \\
MLS~\cite{basart2022scaling} & 66.90 / \textbf{\textcolor{blue4}{57.47}} & 203.87 / \textbf{\textcolor{blue3}{346.30}} & 67.42 / 57.71 & 197.76 / 343.24 & 69.28 / 78.36 & 303.42 / 157.43 & 70.55 / 79.05 & 240.66 / 98.10 \\
KLM~\cite{basart2022scaling} & 65.52 / 48.90 & 306.23 / 524.84 & \textbf{\textcolor{blue1}{67.44}} / 57.09 & 199.23 / 351.05 &  \textbf{\textcolor{blue4}{70.54}} / \textbf{\textcolor{blue5}{79.37}} & 249.44 / \textbf{\textcolor{blue5}{93.00}} & \textbf{\textcolor{blue4}{70.63}} / \textbf{\textcolor{blue4}{79.41}} & \textbf{\textcolor{blue3}{229.50}} / \textbf{\textcolor{blue2}{83.42}} \\
VIM~\cite{wang2022vim} & 59.91 / 55.07 & 270.50 / 359.40 & \textbf{\textcolor{blue4}{67.46}} / \textbf{\textcolor{blue4}{58.94}} & 198.39 / \textbf{\textcolor{blue4}{321.09}} & 68.21 / 77.92 & 299.90 / 139.83 & \textbf{\textcolor{blue1}{70.55}} / \textbf{\textcolor{blue2}{79.18}} & \textbf{\textcolor{blue2}{231.73}} / \textbf{\textcolor{blue3}{81.86}} \\
KNN~\cite{sun2022out} & 67.10 / \textbf{\textcolor{blue4}{57.47}} & 220.78 / \textbf{\textcolor{blue2}{347.05}} & \textbf{\textcolor{blue5}{67.67}} / \textbf{\textcolor{blue1}{58.26}} & \textbf{\textcolor{blue1}{197.30}} / \textbf{\textcolor{blue1}{334.68}} & 67.91 / 77.81 & 261.15 / 102.68 & 70.55 / \textbf{\textcolor{blue1}{79.10}} & \textbf{\textcolor{blue4}{229.33}} / \textbf{\textcolor{blue4}{79.05}} \\
DICE~\cite{sun2022dice} & 65.82 / 57.11 & 214.11 / \textbf{\textcolor{blue1}{347.94}} & 67.43 / 58.04 & 198.75 / 338.83 & 64.96 / 74.14 & 464.09 / 313.53 & 70.55 / 79.06 & 239.46 / 96.42 \\
SIRC (MSP, $||z||_1$)~\cite{xia2022augmenting} & \textbf{\textcolor{blue2}{67.36}} / 56.96 & \textbf{\textcolor{blue2}{199.20}} / 354.96 & 67.44 / 57.05 & \textbf{\textcolor{blue3}{197.08}} / 353.41 & \textbf{\textcolor{blue2}{70.43}} / \textbf{\textcolor{blue2}{78.87}} & \textbf{\textcolor{blue3}{244.86}} / \textbf{\textcolor{blue1}{102.46}} & 70.55 / 79.05 & 240.78 / 97.78 \\
SIRC (MSP, Res.)~\cite{xia2022augmenting} & \textbf{\textcolor{blue1}{67.33}} / 57.00 & \textbf{\textcolor{blue3}{198.86}} / 354.72 & 67.42 / 57.03 & \textbf{\textcolor{blue2}{197.21}} / 353.08 & \textbf{\textcolor{blue1}{70.42}} / \textbf{\textcolor{blue1}{78.86}} & \textbf{\textcolor{blue2}{245.16}} / 102.87 & 70.55 / 79.05 & 241.04 / 98.09 \\
SIRC (-H, $||z||_1$)~\cite{xia2022augmenting} & 67.24 / 57.18 & \textbf{\textcolor{blue5}{198.46}} / 349.53 & \textbf{\textcolor{blue2}{67.44}} / 57.25 & \textbf{\textcolor{blue5}{196.46}} / 348.47 & 69.74 / 78.80 & 264.08 / 118.81 & 70.55 / 79.05 & 241.06 / 98.08 \\
SIRC (-H, Res.)~\cite{xia2022augmenting} & 67.20 / \textbf{\textcolor{blue1}{57.20}} & \textbf{\textcolor{blue4}{198.66}} / 349.68 & 67.42 / 57.26 & \textbf{\textcolor{blue4}{196.60}} / 348.36 & 69.74 / 78.80 & 267.67 / 118.81 & 70.55 / 79.05 & 241.04 / 98.05 \\
Plug-in BB ($||z||_1$)~\cite{narasimhan2024plugin}  & \textbf{\textcolor{blue5}{67.42}} / 57.03 & 202.61 / 368.10 & 67.43 / 57.03 & 202.38 / 367.57 & \textbf{\textcolor{blue3}{70.48}} / \textbf{\textcolor{blue3}{79.00}} & \textbf{\textcolor{blue4}{244.49}} / \textbf{\textcolor{blue2}{101.96}} & 70.55 / 79.05 & 241.04 / 98.01 \\
Plug-in BB (Res)~\cite{narasimhan2024plugin} & \textbf{\textcolor{blue3}{67.40}} / 57.01 & \textbf{\textcolor{blue1}{199.30}} / 357.43 & 67.42 / 57.03 & 198.80 / 356.88 & 69.95 / 78.84 &\textbf{\textcolor{blue1}{245.49}} / 102.55 & 70.55 / 79.05 & 241.04 / 98.10 \\
\midrule
\textbf{\textbf{ID Acc.}} & \multicolumn{4}{c||}{77.32} & \multicolumn{4}{c}{81.79} \\
\bottomrule
\end{tabular}
}
\caption{\textbf{Experiments are conducted on CIFAR100~\cite{krizhevsky2009learning} with ResNet-18~\cite{he2016deep}, and on ImageNet~\cite{deng2009imagenet} with DeiT-B~\cite{touvron2021training}.} For double scoring metrics, we use MSP as the ID score ($s_{\rm ID}$) and apply different OOD scores ($s_{\rm OOD}$), reporting results on both Near- and Far-OOD tests. CIFAR-100 experiments are repeated three times, and average results are reported. The top five methods for each metric are highlighted with a color gradient from \textbf{\textcolor{blue1}{light blue}} to \textbf{\textcolor{blue5}{dark blue}}.}
\label{tab1:two_score}
\end{table*}

%% file: sections/4_method_2.tex
\section{SURE+: A Unified Training Framework for Reliable Classification}

This section presents SURE+, a unified training framework for building strong and reliable classification models under both in-distribution (ID) and out-of-distribution (OOD) evaluation, a setting that commonly arises in real-world applications. The key insight behind SURE+ is that ID confidence calibration and OOD robustness are best addressed through a coherent and unified training framework. Building upon SURE~\cite{li2024sure}, which has proven effective for failure prediction, we systematically explore each component from the perspective of joint reliability and retain only those that demonstrably contribute to performance on both ID and OOD data. This leads to a training recipe that is simpler, more stable, and more generalizable.

\keypoint{Background: Revisiting SURE under joint ID+OOD evaluation.} SURE~\cite{li2024sure} integrates multiple techniques: RegMixup~\cite{pinto2022using}, a cosine similarity classifier~\cite{gidaris2018dynamic}, SAM and SWA~\cite{foret2020sharpness,izmailov2018averaging}, and Correctness Ranking Loss (CRL)~\cite{moon2020confidence}, to improve in-distribution (ID) accuracy and failure prediction. However, our empirical analysis reveals a key limitation: Its performance does not consistently extend to out-of-distribution (OOD) detection, particularly under joint ID+OOD evaluation protocols, we show the experimental results for SURE on both ID and OOD data in Table~\ref{tab:table2_r18_ds} and Table~\ref{tab:table3_dinov3_ds}. This motivates a systematic rethinking of the training pipeline, focusing on identifying components that reliably improve performance across both ID and OOD settings.

\keypoint{Data augmentation as a unifying regularization.} Robust data augmentation is a core component of SURE+. Motivated by the observation that invariance to data perturbations benefits both failure prediction and OOD detection, we adopt a complementary augmentation strategy. Specifically, (i) RegMixup~\cite{pinto2022using} regularizes decision boundaries through label-preserving feature interpolation, and (ii) RegPixMix, inspired by PixMix~\cite{hendrycks2022pixmix}, enforces robustness under stronger pixel-level perturbations. By jointly applying RegMixup and RegPixMix, SURE+ promotes consistency under both semantic-level and pixel-level variations, providing a unified regularization mechanism that improves overall robustness.

\keypoint{Optimization for Reliability: Sharpness-Aware and Stable Training.} From an optimization perspective, SURE+ focuses on flat minima and training stability, both of which are important for reliable uncertainty estimation. We adopt F-SAM~\cite{li2024friendly}, a refined variant of SAM~\cite{foret2020sharpness}, which retains the benefits of sharpness-aware optimization while reducing training instability and over-regularization. Compared to standard SAM, F-SAM leads to more consistent convergence and better robustness to label noise and input perturbations, which are key properties for joint ID and OOD reliability.

\keypoint{Model ensembling via EMA and re-normalized batch norm~\cite{Moralesema}.} To stabilize model predictions, SURE+ replaces stochastic weight averaging (SWA)~\cite{izmailov2018averaging} with an exponential moving average (EMA) of model parameters. EMA preserves the ensemble-like smoothing effect of SWA while (i) reducing sensitivity to hyperparameter schedules, (ii) simplifying training and implementation, and (iii) providing more stable behavior under mixed ID and OOD distributions. When combined with re-normalized batch normalization (Re-BN) statistics, reliability is further and consistently improved across evaluation settings.

\keypoint{Summary.} Complementary data augmentation, sharpness-aware optimization, and Re-BN EMA ensembling together form SURE+, a streamlined and unified training framework for reliable classification. Rather than emphasizing novelty in individual modules, SURE+ shows that carefully selected and well-integrated training principles can provide a strong and reproducible baseline for joint ID and OOD reliability evaluation. This unified perspective makes SURE+ a practical foundation for future research on reliable classification systems. We also remove the Correctness Ranking Loss (CRL)~\cite{moon2020confidence} and Cosine Similarity Classifier (CSC) used in SURE, as they offer marginal improvements while introducing additional complexity.

%% file: sections/5_exp.tex
\section{Experiments}

\subsection{Datasets and Architectures} 

\keypoint{Datasets.}
We conduct experiments on CIFAR-100~\cite{krizhevsky2009learning} and ImageNet-1K~\cite{deng2009imagenet}, following the OpenOOD benchmark~\cite{yang2022openood}.
For CIFAR-100, the Near-OOD datasets include CIFAR-10~\cite{krizhevsky2009learning} and TinyImageNet~\cite{le2015tiny}, with 2,502 overlapping images removed,
while the Far-OOD datasets include MNIST~\cite{lecun1998mnist}, SVHN~\cite{netzer2011reading}, Textures~\cite{cimpoi2014describing}, and Places365~\cite{zhou2017places}, with 1,305 overlapping images removed.
For ImageNet-1K, we include SSB-hard~\cite{vaze2022openset} and NINCO~\cite{bitterwolf2023or} as Near-OOD datasets.
SSB-hard contains 49,000 images covering 980 categories selected from ImageNet-21K~\cite{ridnik1imagenet},
while NINCO consists of 5,879 manually curated images.
The Far-OOD datasets include iNaturalist~\cite{van2018inaturalist}, Textures~\cite{cimpoi2014describing}, and OpenImage-O~\cite{wang2022vim}.

\keypoint{Archtectures.} Following the OpenOOD benchmark, we conduct experiments on CIFAR-100 as the in-distribution (ID) dataset using ResNet-18~\cite{he2016deep} for small-scale evaluation. For large-scale experiments with ImageNet as the ID dataset, we adopt the transformer-based architecture DINOv3 ViT-L/16~\cite{simeoni2025dinov3}. Experiments on CIFAR-100 are repeated three times, and we report the average results. 

\keypoint{Training settings.} We unify the optimization settings across all methods, including a batch size of 128, SGD with momentum 0.9, an initial learning rate of 0.05 with cosine annealing~\cite{loshchilov2017sgdr}, and a weight decay of 5e-3. Models are trained for 200 epochs, and results on CIFAR-100 are averaged over three independent runs to reduce stochastic variability.
For ImageNet-1K, models are fine-tuned for 20 epochs with a maximum learning rate of 1e-6.

\keypoint{{Evaluation settings.}} 
We evaluate the models and post-hoc methods on the corresponding ID test sets, along with Near/Far-OOD sets, reporting both standard metrics (AURC, F1) and the proposed metrics (DS-AURC, DS-F1). 
Different from OpenOOD~\cite{yang2022openood,zhang2023openood}, which only evaluates ID--OOD discrimination, SURE~\cite{li2024sure}/FMFP~\cite{zhu2023revisiting}, which only considers ID test set failure prediction, SIRC~\cite{xia2022augmenting}, which treats the two problems simultaneously but separately, while Plug-in BB~\cite{narasimhan2024plugin} combines the two objectives under a unified framework but is limited to a single confidence score. In contrast, our setting directly evaluates selective classification when both ID and OOD samples coexist.

For the proposed double-scoring metrics, we adopt MSP~\cite{hendrycks2016baseline} as one scoring function, while the other is provided by different post-hoc methods, such as ODIN~\cite{liang2017enhancing}, EBO~\cite{liu2020energy}, etc. 
When the post-hoc method itself is MSP, our metrics naturally reduce to the single-scoring case. 
For the single-scoring metrics, we simply use the score from each post-hoc method as the sole scoring measure.
All hyperparameters of post-hoc methods are selected automatically using both ID and OOD validation samples, with the configuration maximizing AUROC used for final testing. Throughout this paper, unless otherwise noted, all metrics are multiplied by 100, while AURC is multiplied by 1000. 
For completeness, results under the OpenOOD protocol are provided in Appendix~\ref{section:supp_openood}, highlighting the OOD discriminative ability of post-hoc methods and confirming the consistency of our implementation with standard OpenOOD benchmarks.

\input{tables/table8_threshold}
\input{tables/table2_r18_ds}
\input{tables/table3_dinov3_ds}

\subsection{Comparison Between Double Scoring and Single Scoring}
We first compare the proposed double-scoring framework with conventional single-scoring approaches. The results are summarized in Table~\ref{tab1:two_score}. For single-scoring methods, we report AURC and F1 on the test set that contains both ID and OOD samples. For double scoring, we evaluate the proposed DS-AURC and DS-F1 on the same test set. From the results, we make three key observations. 

First, \emph{double scoring consistently outperforms single scoring} across all datasets and evaluation settings.
By combining two complementary scores, namely the confidence score from a post-hoc OOD detection method and the MSP score~\cite{hendrycks2016baseline}, the proposed double-scoring strategy achieves clear and consistent improvements in both F1 and AURC.
These results validate the effectiveness of DS-F1 and DS-AURC as evaluation metrics for selective classification when ID and OOD samples coexist. 

Second, \emph{the performance gains are particularly pronounced on Far-OOD datasets}. Compared to the MSP baseline, double scoring yields substantially larger improvements on Far-OOD samples than on Near-OOD samples. This suggests that existing OOD detection methods are more effective at identifying visually distinct OOD data, while their impact is reduced in the more challenging Near-OOD setting. This observation is consistent with findings reported in OpenOOD~\cite{zhang2023openood}. Although Near-OOD detection is generally considered more difficult, most existing methods are primarily designed and evaluated on Far-OOD benchmarks, which has led to comparatively limited progress on Near-OOD scenarios.

Third, \emph{MSP remains a strong single-score baseline}. Even without double scoring, MSP demonstrates competitive performance, especially on large-scale datasets such as ImageNet~\cite{deng2009imagenet}. This highlights that while MSP provides a robust baseline, the proposed double-scoring framework is able to further push performance beyond the limits of single-score methods.

On the CIFAR-100 benchmark, we observe that Near-OOD performance surpasses Far-OOD, which is consistent with the findings in OpenOOD \cite{zhang2023openood}. Although Near-OOD detection is generally regarded as more challenging, most existing methods are designed and evaluated on Far-OOD, leading to comparatively limited progress on Near-OOD.

\subsection{Double-Threshold Selection and Generalization Analysis}
To further verify that double scoring is consistently better than single scoring, we design a controlled experiment with explicit validation and test splits. We train ResNet-18 on the CIFAR-100~\cite{krizhevsky2009learning} training set, using CIFAR-100 validation as ID with CIFAR-10~\cite{krizhevsky2009learning} as OOD for threshold selection, and test on the CIFAR-100 test set as ID with MNIST~\cite{lecun1998mnist} as OOD. We adopt ReAct~\cite{sun2021react} as the OOD score and compare single score thresholds with a joint DS-F1 threshold pair, where MSP~\cite{hendrycks2016baseline} is the ID score.

As shown in Table~\ref{tab:double_scoring_val}, we first train a model on CIFAR-100 with standard cross entropy, then search for the best ID score threshold, OOD score threshold, and DS-F1 threshold pair on the validation split to maximize F1. The joint double scoring reaches 33.2, which is higher than either single scoring method, as expected. We then apply the validation thresholds to the test split to simulate real use and observe similar relative performance. Double scoring remains better than single scoring, showing that DS-F1 is consistent and better than single F1 across validation and test. This reduces the gap between threshold selection and real deployment, which is important in practice.

\subsection{Effectiveness of SURE+}
The effectiveness of the proposed SURE+ framework is first evaluated on CIFAR100 using ResNet-18. Table~\ref{tab:table2_r18_ds} reports both DS-F1 and DS-AURC results under Near-OOD and Far-OOD settings, covering multiple training methods and post-hoc scores. The compared training strategies include Baseline (standard cross-entropy loss), Mixup~\cite{thulasidasan2019mixup}, Pixmix~\cite{hendrycks2022pixmix}, Augmix~\cite{hendrycks2019augmix}, Cutmix~\cite{yun2019cutmix}, RegMixup~\cite{pinto2022using}, and SURE~\cite{li2024sure}, following the OpenOOD benchmark~\cite{yang2022openood,zhang2023openood}. 

As shown in Table~\ref{tab:table2_r18_ds}, SURE+ consistently achieves the best performance across both DS-F1 and DS-AURC metrics for Near-OOD and Far-OOD detection. In addition, SURE+ reaches an in-distribution classification accuracy of 81.66\% on CIFAR100, outperforming all competing training paradigms. Notably, these gains hold across the majority of post-hoc scores, indicating that the advantages of SURE+ are robust and largely independent of the specific post-hoc choice.

To further examine scalability and generalization, SURE+ is evaluated on a larger-scale and more challenging setting by fine-tuning DINOv3 ViT-L/16 on ImageNet-1K with an input resolution of $224 \times 224$. The corresponding results are reported in Table~\ref{tab:table3_dinov3_ds}, where DS-F1 and DS-AURC are jointly presented for both Near-OOD and Far-OOD benchmarks. In this setting, SURE+ achieves a test accuracy of 88.49\% on ImageNet-1K, while again delivering the best performance on both DS metrics across most post-hoc methods. These results demonstrate that the effectiveness of SURE+ is not limited to CNN architectures, and extends naturally to transformer-based models in large-scale settings.

Finally, an ablation study is conducted to analyze the contribution of each component in the SURE+ framework. The results show that every component contributes meaningfully to improving reliability, and removing any single component leads to a noticeable degradation in DS performance. When combined, the full SURE+ pipeline yields substantial improvements over SURE and other strong baselines, confirming the complementary nature of its design. Results for single-task settings, including OOD detection and failure prediction, are provided in Appendix~\ref{app_sec:surep} for completeness.

%% file: tables/table8_threshold.tex
\begin{table}[t]
\centering

\begin{tabular}{lccc}
\toprule
\multicolumn{4}{c}{\textbf{Validation (CIFAR-100 validation + CIFAR-10)}} \\
\midrule
Method & MSP $\tau_{\text{ID}}$ & ReAct $\tau_{\text{OOD}}$ & F1 / DS-F1  \\
\midrule
ID-only &  0.952 & - & 31.5 \\
OOD-only & - & 0.399 & 31.7 \\
Double Scoring  & 0.927 & 0.345 & \textbf{33.2} \\
\midrule
\multicolumn{4}{c}{\textbf{Test (CIFAR-100 test set + MNIST)}} \\
\midrule
Method & MSP $\tau_{\text{ID}}$ & ReAct $\tau_{\text{OOD}}$ & F1 / DS-F1 \\
\midrule
ID-only (val) & 0.952 & - & 24.4 \\
OOD-only (val) & - & 0.399 & 24.9 \\
Double Scoring (val) & 0.927 & 0.345 & \textbf{26.0} \\
\textcolor{gray}{ID-only (test-opt)} & \textcolor{gray}{0.766} & \textcolor{gray}{-} & \textcolor{gray}{27.2} \\
\textcolor{gray}{OOD-only (test-opt)} & \textcolor{gray}{-} & \textcolor{gray}{0.297} & \textcolor{gray}{30.3} \\
\textcolor{gray}{Double Scoring (test-opt)} & 
\textcolor{gray}{0.371} & 
\textcolor{gray}{0.261} & 
\textcolor{gray}{\textbf{31.9}} \\
\bottomrule
\end{tabular}
\caption{F1/DS-F1 scores and thresholds for MSP~\cite{hendrycks2016baseline}, ReAct~\cite{sun2021react} under validation and test. \textcolor{gray}{Test-opt} indicates the optimal F1/DS-F1 score calculated according to the thresholds directly selected on the test set.}
\label{tab:double_scoring_val}
\end{table}

%% file: tables/table2_r18_ds.tex
\begin{table*}[!ht]
\centering
\scalebox{0.76}{
\begin{tabular}{l||cccccccccc} 
\toprule
\multicolumn{11}{l}{\textbf{Metric: DS-F1 $\uparrow$ ResNet-18 - Trained on CIFAR-100 training set. Evaluated on CIFAR100 test set + Near/Far-OOD.}} \\ 
\midrule
\textbf{Training strategy} & MSP & OpenMax & MDS & Gram & ReAct & KLM & VIM & KNN & SIRC & Acc. \\ 
\midrule
Basic & 67.42/57.03 & 67.42/57.09 & 67.43/57.22 & 67.42/59.16 & 67.43/58.44 & 67.44/57.09 & 67.46/58.94 & 67.67/58.26 & 67.44/57.05 & 77.32 \\
Mixup & 67.86/58.02 & 67.98/61.29 & 67.87/58.34 & 67.87/58.04 & 67.87/58.03 & 67.88/58.18 & 67.87/58.53 & 68.06/59.72 & 67.87/58.02 & 78.47 \\
RegMixup & 68.77/55.08 & 68.77/58.16 & 68.77/55.41 & 68.77/56.44 & 68.78/55.83 & 68.78/55.15 & 68.77/55.60 & 68.81/58.22 & 68.78/55.11 & 79.35 \\
AugMix & 66.40/58.82 & 66.40/58.93 & 66.40/58.92 & 66.40/59.22 & 66.41/60.19 & 66.41/58.91 & 66.42/59.39 & 66.66/59.26 & 66.41/58.86 & 76.98 \\
PixMix & 66.65/57.06 & 66.65/57.04 & 66.65/57.05 & 66.65/60.38 & 66.78/58.83 & 66.68/57.40 & 66.69/58.13 & 66.84/58.25 & 66.67/57.53 & 77.20 \\
CutMix & 65.99/55.59 & 66.08/60.75 & 65.99/56.30 & 65.99/55.99 & 65.99/55.58 & 66.00/55.88 & 65.99/56.06 & 66.59/59.83 & 65.99/55.58 & 77.81 \\
\midrule
SURE & 68.07/53.09 & 68.18/54.10 & 68.07/53.13 & 68.07/56.89 & 68.08/54.00 & 68.07/53.31 & 68.07/53.50 & 69.43/57.26 & 68.37/55.81 & 80.55 \\
- CSC & 68.72/56.17 & 68.72/58.58 & 68.72/56.94 & 68.73/57.04 & 69.05/60.45 & 68.85/58.27 & 68.74/60.16 & 68.76/60.49 & 68.72/56.20 & 80.36 \\
- CRL & 69.03/57.56 & 69.02/58.49 & 69.02/58.00 & 69.02/58.40 & 69.21/60.22 & 69.08/58.82 & 69.03/\textbf{62.49} & 69.07/62.83 & 69.03/57.65 & 80.68 \\
+ SWA$\rightarrow$EMA & 69.49/54.47 & 69.49/57.65 & 69.49/54.53 & 69.49/54.95 & 69.49/55.48 & 69.65/55.91 & 69.49/54.92 & 69.49/57.40 & 69.49/54.48 & 80.63 \\
+ ReBN & 69.24/57.34 & 69.24/58.24 & 69.24/57.77 & 69.24/57.99 & 69.45/60.71 & 69.29/58.40 & 69.26/60.01 & 69.27/62.48 & 69.24/57.36 & 80.54 \\
+ SAM$\rightarrow$F-SAM & 69.41/57.05 & 69.41/59.00 & 69.41/57.45 & 69.42/58.46 & 69.52/60.26 & 69.53/58.54 & 69.41/58.59 & 69.45/61.52 & 69.41/57.09 & 80.79 \\
+ RegPixMix (\textbf{SURE+}) & \textbf{70.67}/\textbf{61.35} & \textbf{70.67}/\textbf{61.33} & \textbf{70.67}/\textbf{61.42} & \textbf{70.67}/\textbf{64.76} & \textbf{70.76}/\textbf{63.29} & \textbf{70.68}/\textbf{61.87} & \textbf{70.67}/62.30 & \textbf{70.67}/\textbf{64.51} & \textbf{70.67}/\textbf{61.90} & \textbf{81.66} \\
\midrule
\multicolumn{11}{l}{\textbf{Metric: DS-AURC $\downarrow$ ResNet-18 - Trained on CIFAR-100 training set. Evaluated on CIFAR100 test set + Near/Far-OOD.}} \\
\midrule
\textbf{Training strategy} & MSP & OpenMax & MDS & Gram & ReAct & KLM & VIM & KNN & SIRC & Acc. \\ 
\midrule
Basic & 202.38/367.56 & 201.78/365.87 & 201.78/355.07 & 201.14/313.08 & 199.61/331.40 & 199.23/351.05 & 198.39/321.09 & 197.30/334.68 & 197.08/353.41 & 77.32 \\
Mixup & 199.15/357.54 & 194.40/\textbf{308.17} & 198.07/345.87 & 198.70/351.96 & 199.04/352.68 & 197.36/344.72 & 197.93/346.39 & 193.16/319.28 & 198.98/356.78 & 78.47 \\
RegMixup & 184.32/398.83 & 183.40/364.21 & 184.15/385.03 & 184.32/364.54 & 184.22/384.43 & 183.03/382.76 & 183.78/388.94 & 182.80/339.67 & 184.08/397.89 & 79.35 \\
AugMix & 212.08/341.92 & 211.71/336.07 & 211.78/335.16 & 209.38/328.47 & 204.51/301.55 & 211.52/335.61 & 206.50/318.61 & 206.42/321.53 & 202.19/327.13 & 76.98 \\
PixMix & 210.44/374.21 & 209.78/373.78 & 209.78/361.44 & 206.15/305.26 & 200.31/322.75 & 209.78/344.67 & 203.68/328.09 & 206.99/324.76 & 199.77/336.33 & 77.20 \\
CutMix & 257.72/426.99 & 248.02/350.24 & 246.43/374.27 & 256.78/406.69 & 256.78/417.24 & 246.96/390.84 & 252.25/402.06 & 216.56/312.48 & 256.78/425.78 & 77.81 \\
\midrule
SURE & 199.05/393.25 & 191.80/374.63 & 198.78/392.43 & 196.41/341.42 & 198.78/380.26 & 198.11/386.29 & 198.78/385.16 & 182.86/339.69 & 193.09/355.93 & 80.55 \\
- CSC & 188.42/367.92 & 187.78/335.35 & 187.78/347.37 & 185.59/351.69 & 180.23/299.64 & 184.64/335.85 & 187.35/302.61 & 187.48/298.00 & 185.31/361.24 & 80.36 \\
- CRL & 187.34/352.49 & 186.78/337.60 & 186.78/341.63 & 186.60/339.22 & 182.15/301.00 & 186.78/329.13 & 185.66/\textbf{274.06} & 185.43/270.08 & 183.83/343.65 & 80.68 \\
+ SWA$\rightarrow$EMA & 185.28/420.08 & 184.06/383.58 & 185.17/414.74 & 185.21/401.58 & 185.55/400.30 & 183.25/366.63 & 185.19/410.03 & 183.99/354.73 & 185.25/419.75 & 80.63 \\
+ ReBN & 183.61/355.63 & 182.78/343.26 & 182.78/346.65 & 183.01/340.78 & 177.42/299.51 & 181.67/337.65 & 183.23/305.04 & 182.78/273.98 & 180.41/345.06 & 80.54 \\
+ SAM$\rightarrow$F-SAM & 181.86/368.00 & 180.78/345.65 & 180.78/357.42 & 179.79/340.81 & 178.19/308.97 & 180.73/340.58 & 181.47/338.15 & 180.82/293.52 & 178.58/357.88 & 80.79 \\
+ RegPixMix (\textbf{SURE+}) & \textbf{173.45}/\textbf{314.04} & \textbf{173.13}/313.78 & \textbf{172.78}/\textbf{309.11} & \textbf{171.51}/\textbf{262.85} & \textbf{169.02}/\textbf{284.43} & \textbf{171.06}/\textbf{294.40} & \textbf{172.95}/293.50 & \textbf{173.45}/\textbf{252.85} & \textbf{170.28}/\textbf{293.09} & \textbf{81.66} \\ 
\bottomrule
\end{tabular}
}
\caption{\textbf{DS-F1 and DS-AURC results on CIFAR-100 (ResNet-18) with different training strategies}. We report DS-F1 and DS-AURC scores for near/far-OOD settings, along with ID accuracy. SURE+ consistently achieves the best DS-F1 and DS-AURC scores while maintaining or improving ID accuracy.}
\label{tab:table2_r18_ds}
\end{table*}

%% file: tables/table3_dinov3_ds.tex
\begin{table*}[ht]
\centering
\scalebox{.8}{
\begin{tabular}{l||cccccccccc} 
\toprule
\multicolumn{11}{l}{\textbf{Metric: DS-F1 $\uparrow$ DINOv3 ViT-L/16 - Fine-tuned on ImageNet-1K training set. Evaluated on ImageNet-1K test set + Near/Far-OOD.}} \\  
\midrule
\textbf{Training strategy} & \textbf{MSP} & \textbf{OpenMax} & \textbf{MDS} & \textbf{Gram} & \textbf{ReAct} & \textbf{KLM} & \textbf{VIM} & \textbf{KNN} & \textbf{SIRC} & \textbf{Acc.} \\ 
\midrule
Basic & 75.42/84.38 & 75.54/84.48 & 76.55/85.26 & 75.42/84.38 & 76.82/85.33 & 75.51/84.59 & 76.37/85.13 & 75.87/84.38 & 75.53/84.52 & 86.89 \\
Mixup & 75.09/84.03 & 75.09/84.06 & 76.54/85.18 & 75.09/84.02 & 75.96/85.04 & 75.21/84.25 & 75.50/84.52 & 75.10/84.25 & 75.09/84.05 & 87.01 \\
RegMixup & 75.01/83.84 & 75.01/83.90 & 76.12/85.10 & 75.01/83.84 & 76.62/85.27 & 75.28/84.30 & 75.35/84.37 & 75.01/84.01 & 75.05/84.12 & 87.07 \\
AugMix & 75.99/84.98 & 76.18/85.09 & \textbf{77.39}/86.08 & 75.99/84.98 & 77.22/85.99 & 76.17/85.29 & 76.85/85.81 & 76.26/85.43 & 76.07/85.11 & 87.72 \\
PixMix & 75.68/84.37 & 75.96/84.66 & 77.29/85.69 & 75.68/84.37 & 77.24/85.59 & 75.86/84.81 & 76.52/85.15 & 76.16/85.04 & 75.74/84.51 & 87.32 \\
CutMix & 74.97/84.00 & 75.06/84.12 & 76.34/85.33 & 74.96/84.00 & 76.33/85.26 & 75.31/84.35 & 75.49/84.70 & 74.97/84.33 & 75.05/84.37 & 87.12 \\
\midrule
SURE & 76.62/85.44 & 76.62/85.48 & 76.66/85.91 & 76.62/85.44 & 77.67/86.30 & 76.78/85.81 & 76.92/86.00 & 76.62/85.50 & 76.62/85.45 & 87.94 \\
- CSC & 76.53/85.68 & 76.53/85.68 & 77.12/86.35 & 76.53/85.68 & 77.87/86.71 & 76.68/85.91 & 76.83/86.16 & 76.53/85.74 & 76.55/85.68 & 87.95 \\
- CRL & 76.39/85.39 & 76.39/85.39 & 77.24/86.32 & 76.39/85.39 & 77.83/86.56 & 76.65/85.66 & 76.76/85.96 & 76.39/85.49 & 76.39/85.39 & 88.02 \\
+ SWA$\rightarrow$EMA & 76.35/84.29 & 76.37/84.39 & 76.98/86.10 & 76.45/84.41 & 77.67/86.46 & 76.64/85.05 & 76.34/85.05 & 76.23/84.86 & 76.23/85.13 & 88.05 \\
+ ReBN & 76.43/85.42 & 76.43/85.42 & 77.11/86.28 & 76.43/85.42 & 77.90/86.58 & 76.54/85.67 & 76.75/86.00 & 76.43/85.52 & 76.43/85.42 & 87.99 \\
+ SAM$\rightarrow$F-SAM & 76.41/85.40 & 76.41/85.40 & 77.16/86.30 & 76.41/85.40 & 77.88/86.56 & 76.55/85.67 & 76.75/85.97 & 76.41/85.48 & 76.41/85.40 & 88.04 \\
+ RegPixMix (\textbf{SURE+}) & \textbf{77.10}/\textbf{86.15} & \textbf{77.13}/\textbf{86.15} & 77.20/\textbf{86.72} & \textbf{77.10}/\textbf{86.15} & \textbf{78.01}/\textbf{87.13} & \textbf{77.35}/\textbf{86.46} & \textbf{77.40}/\textbf{86.78} & \textbf{77.10}/\textbf{86.15} & \textbf{77.10}/\textbf{86.15} & 88.49 \\
\midrule
\multicolumn{11}{l}{\textbf{Metric: DS-AURC $\downarrow$ DINOv3 ViT-L/16 - Fine-tuned on ImageNet-1K training set. Evaluated on ImageNet-1K test set + Near/Far-OOD.}} \\
\midrule
\textbf{Training strategy} & \textbf{MSP} & \textbf{OpenMax} & \textbf{MDS} & \textbf{Gram} & \textbf{ReAct} & \textbf{KLM} & \textbf{VIM} & \textbf{KNN} & \textbf{SIRC} & \textbf{Acc.} \\ 
\midrule
Basic & 168.33/48.07 & 167.28/47.41 & 154.64/40.43 & 164.43/47.78 & 146.42/39.29 & 168.33/46.90 & 153.82/41.95 & 156.56/47.92 & 165.24/46.84 & 86.89 \\
Mixup & 172.58/53.46 & 172.57/53.18 & 147.78/41.47 & 171.81/53.30 & 162.90/47.33 & 171.63/51.73 & 168.68/50.25 & 170.21/48.53 & 172.18/53.30 & 87.01 \\
RegMixup & 169.43/52.81 & 169.43/51.96 & 149.98/41.17 & 165.30/52.51 & 140.66/40.84 & 168.72/50.30 & 162.51/47.54 & 167.51/48.42 & 162.12/50.23 & 87.07 \\
AugMix & 163.74/44.63 & 162.05/43.53 & 140.45/35.67 & 161.93/44.57 & 142.35/36.25 & 163.05/42.72 & 149.90/38.38 & 154.66/40.47 & 161.44/43.70 & 87.72 \\
PixMix & 167.34/48.95 & 165.81/47.27 & 139.82/37.35 & 164.22/48.59 & 141.90/38.49 & 167.34/46.87 & 153.69/42.43 & 156.34/42.54 & 163.76/47.61 & 87.32 \\
CutMix & 169.78/51.94 & 169.52/51.02 & 150.77/39.41 & 165.73/51.56 & 149.11/40.53 & 167.82/49.93 & 164.51/45.67 & 167.91/47.23 & 164.49/49.03 & 87.12 \\
\midrule
SURE & 157.07/42.13 & 157.07/41.51 & 155.56/36.52 & 155.96/42.00 & 155.21/36.92 & 156.04/39.39 & 151.03/38.61 & 156.02/39.20 & 154.99/41.56 & 87.94 \\
- CSC & 156.47/39.70 & 155.77/39.70 & 157.18/34.66 & 153.58/39.56 & 153.69/32.72 & 155.00/38.09 & 151.38/36.56 & 155.66/38.89 & \textbf{151.44}/39.29 & 87.95 \\
- CRL & 157.30/43.27 & 157.30/43.12 & \textbf{154.20}/35.27 & 155.31/43.18 & 154.32/35.01 & 157.18/41.12 & 150.96/39.15 & 157.05/41.53 & 153.39/42.92 & 88.02 \\
+ SWA$\rightarrow$EMA & 157.56/43.86 & 156.83/43.86 & 157.26/36.17 & 155.68/42.99 & 151.78/35.44 & 156.69/41.53 & 150.60/38.81 & 157.56/42.86 & 152.83/43.49 & 88.05 \\
+ ReBN & 156.98/42.36 & 155.82/42.36 & 156.46/35.17 & \textbf{154.21}/42.27 & 149.38/34.14 & 155.84/40.43 & 150.90/38.21 & 155.96/41.00 & 152.07/42.02 & 87.99 \\
+ SAM$\rightarrow$F-SAM & 156.77/42.75 & 156.04/42.75 & 155.45/35.24 & 154.66/42.63 & 148.94/34.40 & 156.20/40.76 & 150.84/38.56 & 156.25/41.23 & 152.44/42.41 & 88.04 \\
+ RegPixMix (\textbf{SURE+}) & \textbf{156.00}/\textbf{38.07} & \textbf{155.00}/\textbf{38.07} & 154.59/\textbf{34.39} & 154.56/\textbf{38.07} & \textbf{144.71}/\textbf{31.56} & \textbf{156.00}/\textbf{36.14} & \textbf{150.63}/\textbf{33.76} & \textbf{150.12}/\textbf{38.07} & 152.00/\textbf{37.73} & 88.49 \\
\bottomrule
\end{tabular}
}
\caption{\textbf{DS-F1 and DS-AURC results on ImageNet-1K (DINOv3 ViT-L/16) with different fine-tuning strategies}. We report DS-F1  and DS-AURC scores for near/far-OOD settings, along with ID accuracy. SURE+ consistently achieves the best DS-F1 and DS-AURC scores while maintaining or improving ID accuracy.}
\label{tab:table3_dinov3_ds}
\end{table*}

%% file: sections/7_futurework.tex
\section{Challenges and Future Directions}

A significant frontier in enhancing classifier reliability involves the integration of training-based OOD methods that utilize auxiliary data, such as Outlier Exposure (OE)~\cite{hendrycks2019outlier} or OpenMix~\cite{zhu2023openmix}. While post-hoc methods rely on signals from pre-trained models, training-based approaches explicitly modify the learning process to make models more sensitive to OOD inputs. Future research should explore how incorporating such auxiliary outliers can be optimized within a double-scoring framework, potentially leading to more distinct decision boundaries between ID and OOD samples. For instance, while current recipes like SURE+ utilize augmentation-based regularization such as RegPixMix and RegMixup to improve robustness without external OOD data, the explicit exposure to simulated outliers during training remains a promising avenue for further narrowing the reliability gap.

Future work must also address the persistent performance bottleneck in Near-OOD scenarios. Experimental evidence indicates that while advanced OOD detection methods provide substantial gains for Far-OOD shifts, they offer only marginal benefits when OOD samples are visually similar to ID data. This suggests that current post-hoc scoring functions and training objectives may struggle to capture the fine-grained semantic differences required for Near-OOD discrimination. Consequently, developing more discriminative feature representations or specialized loss functions that target these challenging distribution shifts remains a critical priority for deploying trustworthy AI in realistic, safety-critical environments. In addition, threshold selection remains an underexplored yet practically critical component of real-world deployment. Current practice typically relies on manually tuned thresholds calibrated on held-out validation data, which may still fail to generalize under distribution shift. One promising direction is to leverage powerful generative models to synthesize unified and controllable data distributions for calibration. Such synthetic data could serve as a proxy for potential shift scenarios, enabling more stable and reliable decision boundary estimation without extensive manual tuning.

%% file: sections/8_conclusion.tex
\section{Conclusion}
\label{sec:conclusion}
In this work, we present a unified perspective on classifier reliability, emphasizing the complementary nature of OOD detection and failure prediction. We show that evaluating these aspects in isolation is suboptimal and propose two complementary metrics, DS-F1 and DS-AURC, to jointly assess a model’s ability to identify OOD inputs and anticipate its own errors. Extensive experiments on the OpenOOD benchmark validate the effectiveness of our unified evaluation framework, demonstrating its ability to distinguish classifiers that are truly robust and trustworthy. We further introduce SURE+, an improved, reliable classifier that integrates recent advances in both OOD detection and failure prediction. Empirical results confirm that SURE+ achieves better reliability across different post-hoc scores, highlighting the practical value of our approach. We believe that our framework and metrics offer a principled foundation for future research on trustworthy AI systems.

%% file: sections/9_acknowledge.tex

%% file: supp.tex
\appendices

\section{Details on proposed evaluation metrics}
\subsection{Implementation for DS-F1 and DS-AURC}\label{section:supp_implementation}
Algorithm~\ref{alg:ds-f1-calc} summarizes the computation procedure for DS-F1 defined in Eq.~\ref{eq:dual-f1}. 
Given a model $m$ and corresponding scoring functions $s_{\text{ID}}$ and $s_{\text{OOD}}$, the algorithm evaluates the F1 score over a grid of ID and OOD thresholds. 
For each threshold pair $(\tau_{\text{ID}}, \tau_{\text{OOD}})$, it computes precision and recall based on correct ID classifications. 
The final DS-F1 score is the maximum F1 obtained over all threshold combinations. 

\begin{algorithm}
\caption{Computation of DS-F1 on Evaluation Set}
\label{alg:ds-f1-calc}
\begin{algorithmic}[1]
\Require
Model $m$; ID and OOD scoring functions $s_{\text{ID}}, s_{\text{OOD}}$; 
evaluation set $\mathcal{D} = \mathcal{D}_{\text{ID}} \cup \mathcal{D}_{\text{OOD}}$; 
number of thresholds $T_{\text{grid}}$
\Ensure
DS-F1 score

\State Compute scores $s_{\text{ID}}(x_i)$ and $s_{\text{OOD}}(x_i)$ for all $x_i \in \mathcal{D}$
\State Initialize $\text{DS-F1} \gets 0$

\State Obtain quantile thresholds
$\{\tau_{\text{ID},t}\}_{t=1}^{T_{\text{grid}}}$ and
$\{\tau_{\text{OOD},t}\}_{t=1}^{T_{\text{grid}}}$

\ForAll{$\tau_{\text{ID}} \in \{\tau_{\text{ID},t}\}$}
    \ForAll{$\tau_{\text{OOD}} \in \{\tau_{\text{OOD},t}\}$}

        \State Construct acceptance set
        \[
        \mathcal{A} = \{\, i \mid
        s_{\text{ID}}(x_i) \ge \tau_{\text{ID}}
        \ \wedge \
        s_{\text{OOD}}(x_i) \ge \tau_{\text{OOD}} \,\}
        \]

        \State $A_{\text{ID}} \gets \mathcal{A} \cap \mathcal{D}_{\text{ID}}$

        \If{$|\mathcal{A}| = 0$}
            \State $\text{F1} \gets 0$
        \Else
            \State Compute ID error count
            \[
            \epsilon_{\text{ID}} =
            \sum_{i \in A_{\text{ID}}}
            \mathbb{I}\!\left(m(x_i) \neq y_i\right)
            \]

            \State Compute precision and recall

            \State $\text{precision} \leftarrow (|A_{\text{ID}}| - \epsilon_{\text{ID}}) / |\mathcal{A}|$
            \State $\text{recall} \leftarrow (|A_{\text{ID}}| - \epsilon_{\text{ID}}) / |\mathcal{D}_{\text{ID}}|$
            
            \State $\text{F1} \leftarrow 2 \cdot \text{precision} \cdot \text{recall} / (\text{precision} + \text{recall})$
        \EndIf

        \State $\text{DS-F1} \gets \max(\text{DS-F1}, \text{F1})$
    \EndFor
\EndFor

\State \Return $\text{DS-F1}$
\end{algorithmic}
\end{algorithm}

Algorithm~\ref{alg:dual-aurc-calc} summarizes the computation procedure for DS-AURC.
Following Eqs.~\ref{eq:coverage}, \ref{eq:selrisk-dual}, and \ref{eq:dual-aurc-prob}, we first discretize coverage into $K$ intervals and enumerate thresholds to obtain a finite set of $(u, \text{risk})$ pairs. 
For each threshold (or threshold pair in the double scoring case), the algorithm calculates the accepted sample set, computes the corresponding selective risk, and records the coverage level. 
Finally, the risk values are aggregated within each coverage interval, and the minimization is applied for the aggregation when we consider DS-AURC. Finally, we integrate numerically to obtain the final AURC or DS-AURC score. 

\begin{algorithm}[h]
\caption{Computation of (DS-)AURC on Evaluation Set}
\label{alg:dual-aurc-calc}
\begin{algorithmic}[1]
\Require
Model $m$; scoring functions $s_{\text{ID}}, s_{\text{OOD}}$ (single or double);
evaluation set $\mathcal{D}=\mathcal{D}_{\text{ID}}\cup\mathcal{D}_{\text{OOD}}$;
number of thresholds $T_{\text{single}}$ or $T_{\text{grid}}$;
number of coverage bins $K$
\Ensure
AURC or DS-AURC score

\State Compute scores for all samples
\State Initialize set $\mathcal{P} \gets \emptyset$

\If{single scoring}
    \State Obtain quantile thresholds $\{\tau_t\}_{t=1}^{T_{\text{single}}}$
    \ForAll{$\tau \in \{\tau_t\}$}
        \State $\mathcal{A} \gets \{ i \mid s(x_i) \ge \tau \}$
        \State $A_{\text{ID}} \gets \mathcal{A} \cap \mathcal{D}_{\text{ID}}$
        \State $A_{\text{OOD}} \gets \mathcal{A} \cap \mathcal{D}_{\text{OOD}}$

        \If{$|\mathcal{A}| = 0$}
            \State $\text{risk} \gets 0$
        \Else
            \State $\epsilon_{\text{ID}} \gets
            \sum_{i \in A_{\text{ID}}} \mathbb{I}(m(x_i) \neq y_i)$
            \State $\text{risk} \gets
            (\epsilon_{\text{ID}} + |A_{\text{OOD}}|) / |\mathcal{A}|$
        \EndIf

        \State $u \gets |A_{\text{ID}}| / |\mathcal{D}_{\text{ID}}|$
        \State Append $(u,\text{risk})$ to $\mathcal{P}$
    \EndFor
\Else
    \State Obtain quantile thresholds
    $\{\tau_{\text{ID},t}\}$ and $\{\tau_{\text{OOD},t}\}$
    \ForAll{$(\tau_{\text{ID}},\tau_{\text{OOD}})$}
        \State $\mathcal{A} \gets
        \{ i \mid s_{\text{ID}}(x_i)\!\ge\!\tau_{\text{ID}}
        \wedge
        s_{\text{OOD}}(x_i)\!\ge\!\tau_{\text{OOD}} \}$

        \State $A_{\text{ID}} \gets \mathcal{A} \cap \mathcal{D}_{\text{ID}}$
        \State $A_{\text{OOD}} \gets \mathcal{A} \cap \mathcal{D}_{\text{OOD}}$

        \If{$|\mathcal{A}| = 0$}
            \State $\text{risk} \gets 0$
        \Else
            \State $\epsilon_{\text{ID}} \gets
            \sum_{i \in A_{\text{ID}}} \mathbb{I}(m(x_i) \neq y_i)$
            \State $\text{risk} \gets
            (\epsilon_{\text{ID}} + |A_{\text{OOD}}|) / |\mathcal{A}|$
        \EndIf

        \State $u \gets |A_{\text{ID}}| / |\mathcal{D}_{\text{ID}}|$
        \State Append $(u,\text{risk})$ to $\mathcal{P}$
    \EndFor
\EndIf

\State Partition $[0,1]$ into $K$ coverage intervals
\State Aggregate risks per interval using $\phi$ (e.g., minimum)
\State \Return area under the risk--coverage curve
\end{algorithmic}
\end{algorithm}

\input{tables/table4_app_sep}

\subsection{Consistency with single-scoring metrics}\label{section:supp_proof}
We provide formal arguments showing that DS-F1 and DS-AURC strictly generalize their standard counterparts. The key observations rely only on the definitions of minimum and maximum over sets.

\begin{lemma}[Property of minima]
For any function $g: \mathcal{X} \to \mathbb{R}$ and any $x_0 \in \mathcal{X}$, it holds that
\[
\min_{x \in \mathcal{X}} g(x) \;\le\; g(x_0).
\]
\end{lemma}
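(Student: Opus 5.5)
The claim follows directly from the definition of the minimum, so the plan is short. We read $\min_{x\in\mathcal{X}} g(x)$ as the value $g(x^\star)$ for some $x^\star\in\mathcal{X}$ with $g(x^\star)\le g(x)$ for every $x\in\mathcal{X}$. In our applications this is well defined because $\mathcal{X}$ is finite and nonempty: it is the finite set of threshold pairs (quantile grids) in Algorithms~\ref{alg:ds-f1-calc} and~\ref{alg:dual-aurc-calc}, or the finite set of risks sharing a coverage bin. If $\mathcal{X}$ is infinite, we read $\min$ as $\inf$ instead.

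The only step is to instantiate the universal quantifier in that definition at the given point $x_0\in\mathcal{X}$. This gives $g(x^\star)\le g(x_0)$, which is exactly the stated inequality. With the infimum reading, $\inf_{x} g(x)$ is by definition a lower bound of $\{g(x):x\in\mathcal{X}\}$. Since $g(x_0)$ belongs to that set, the inequality again holds.

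There is no real obstacle here. The only point needing care is that the minimum exists; the finiteness (or the infimum convention) above settles this. I would also record the symmetric fact for later use: $\max_{x\in\mathcal{X}} g(x)\ge g(x_0)$. It follows by applying the lemma to $-g$.

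This is what the subsequent consistency claims need. Take $x_0$ to be a threshold pair that disables one score, for example $\tau_{\rm OOD}=-\infty$, so that $\mathcal{A}$ depends on the other score alone. Then the max form gives DS-F1 $\ge$ F1, and the min form, applied at each coverage level with $\phi=\min$ and then integrated using monotonicity of the integral, gives DS-AURC $\le$ AURC.
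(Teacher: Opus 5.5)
Your proof is correct and matches the paper, which states that this lemma is a direct consequence of the definition of the minimum; instantiating the defining inequality $g(x^\star)\le g(x)$ at $x=x_0$ is exactly that argument. Your remark on existence (a finite threshold grid, or reading $\min$ as $\inf$ when $\mathcal{X}$ is infinite) is a harmless and slightly more careful addition that the paper leaves implicit.
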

\begin{lemma}[Property of maxima under set inclusion]
Let $h: \mathcal{X} \to \mathbb{R}$ and $S_1 \subseteq S_2 \subseteq \mathcal{X}$. Then
\[
\max_{x \in S_1} h(x) \;\le\; \max_{x \in S_2} h(x).
\]
\end{lemma}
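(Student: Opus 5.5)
The statement to prove is the second lemma: maxima are monotone under set inclusion. I would argue directly from the definition of the maximum. I assume, as the statement implicitly does, that both maxima exist, for instance because $S_1$ and $S_2$ are finite. That is the case in our application, since the thresholds range over a finite quantile grid and the acceptance sets $\mathcal{A}(\tau_\text{OOD},\tau_\text{ID})$ take only finitely many values. If $S_1=\varnothing$, I would adopt the convention $\max_{\varnothing}h=-\infty$, and the claim is then trivial. So assume $S_1\neq\varnothing$.

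The first step is to pick a maximizer $x^\star\in S_1$, that is, a point with $h(x^\star)=\max_{x\in S_1}h(x)$. Such a point exists by the assumption that the maximum is attained.

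The second step uses the inclusion $S_1\subseteq S_2$, which gives $x^\star\in S_2$.

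The third step invokes the defining property of the maximum over $S_2$, namely $h(x)\le\max_{x'\in S_2}h(x')$ for every $x\in S_2$. Applying this at $x=x^\star$ gives
\[
\max_{x\in S_1}h(x)=h(x^\star)\le \max_{x\in S_2}h(x).
\]
This can be phrased equivalently as the preceding Lemma (Property of minima) applied to $-h$ on $S_2$ at the point $x_0=x^\star$, using $\max h=-\min(-h)$.

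There is no real obstacle here. The only point needing care is the existence of the maxima. If one wanted the statement for infinite sets, I would replace $\max$ by $\sup$ and run the same argument: every $h(x)$ with $x\in S_1$ is bounded above by $\sup_{S_2}h$, so $\sup_{S_1}h\le\sup_{S_2}h$.

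In the intended use, $S_2$ is the full set of threshold pairs and $S_1$ is the slice where one threshold is fixed at $-\infty$. On that slice the acceptance set reduces to the single-score one, so DS-F1 $\ge$ F1 follows at once.
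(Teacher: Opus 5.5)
Your proof is correct and takes the same approach as the paper, which only states that the lemma is a direct consequence of the definition of the maximum. Taking a maximizer $x^\star \in S_1$ and noting $x^\star \in S_2$ is that argument written out, and your extra care about existence (finite sets, or passing to $\sup$) and about $S_1=\varnothing$ is a sensible addition.
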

Both lemmas are direct consequences of the definitions of minimum and maximum.

\textbf{DS-F1 dominates F1.}
Let $f(\tau_\text{OOD}, \tau_\text{ID})$ denote the ds-F1 score obtained under thresholds $(\tau_\text{OOD}, \tau_\text{ID})$. Then, we can simplify Eq.~\ref{eq:dual-f1} as:

\begin{equation*}
    \text{DS-F1} = \max_{\tau_\text{OOD}, \tau_\text{ID}} f(\tau_\text{OOD}, \tau_\text{ID})
\end{equation*}

If we consider a system with both ID and OOD samples evaluating with the classic F1 score and the previous pipeline, the results become:

\begin{equation*}
    \text{F1} = \max_{\tau_\text{ID}} f(\tau'_\text{OOD}, \tau_\text{ID}) 
\end{equation*}
where $\tau'_\text{OOD}$ is a fixed optimal threshold obtained by a certain post-hoc method. Then we have:

\begin{equation*}
    \text{F1} \leq \text{DS-F1}
\end{equation*}
and this inequality will also stand when we have a fixed $\tau_{\text{ID}}$.

\begin{proof}
Define the feasible sets
\[
\begin{aligned}
S_1 &= \{ (\tau'_{\text{OOD}}, \tau_{\text{ID}})
        \mid \tau_{\text{ID}} \in [0,1] \}, \\
S_2 &= \{ (\tau_{\text{OOD}}, \tau_{\text{ID}})
        \mid \tau_{\text{OOD}}, \tau_{\text{ID}} \in [0,1] \}.
\end{aligned}
\]

Clearly $S_1 \subseteq S_2$. By Lemma 2,
\[
\max_{(\tau_\text{OOD},\tau_\text{ID})\in S_1} f(\tau_\text{OOD},\tau_\text{ID})
\le
\max_{(\tau_\text{OOD},\tau_\text{ID})\in S_2} f(\tau_\text{OOD},\tau_\text{ID}).
\]
This is equivalent to $\text{F1} \le \text{DS-F1}$.
\end{proof}

\textbf{DS-AURC dominates AURC.}
When we calculate the regular AURC to filter both misclassified ID samples and accepted OOD samples from the whole acceptance set defined in Eq.~\ref{eq:setA}, we can consider that OOD filtering is fixed, the acceptance set is restricted to $\mathcal{A}(\tau'_{\text{OOD}}, \tau_{\text{ID}})$, where $\tau'_{\text{OOD}}$ is a fixed threshold for OOD rejection. Accordingly, we then can rewrite Eq.~\ref{eq:selrisk-dual} as
\[
\begin{aligned}
\text{SR}(\tau_{\text{ID}})
&= \frac{
    \sum_{i \in \mathcal{D}_{\text{ID}}}
    Z_i \cdot \mathbb{I}
    \bigl(i \in \mathcal{A}(\tau'_{\text{OOD}}, \tau_{\text{ID}})\bigr)
}{ 
    \bigl|
    \mathcal{A}(\tau'_{\text{OOD}}, \tau_{\text{ID}})
    \bigr|
}
\\[4pt]
&\quad +
\frac{
    \bigl|
    \mathcal{A}(\tau'_{\text{OOD}}, \tau_{\text{ID}})
    \cap
    \mathcal{D}_{\text{OOD}}
    \bigr|
}{
    \bigl|
    \mathcal{A}(\tau'_{\text{OOD}}, \tau_{\text{ID}})
    \bigr|
}.
\end{aligned}
\]

Since the coverage in Eq.~\ref{eq:coverage} is defined over all admissible acceptance sets, we have
\begin{equation*} \Bigl\{\,x \;\big|\; x=\text{SR}(\tau_{\text{ID}})\,\Bigr\} \;\subseteq\; \Bigl\{\,x \;\big|\; x=\text{SR}(\tau_{\text{OOD}}, \tau_{\text{ID}})\,\Bigr\}. \end{equation*}
That is, the set of risks attainable under standard AURC is a strict subset of those attainable under DS-AURC. Consequently, if we use the minimum selective risk under a given coverage as the aggregation operator (cf. Eq.~\ref{eq:dual-aurc-prob}), DS-AURC necessarily achieves lower risk:
\begin{equation*}
    \text{DS-AURC}(m, s_\text{OOD}, s_\text{ID})
    \;\le\;
    \text{AURC}(m, s).
\end{equation*}

\begin{proof}
Given a fix $\tau_\text{OOD}$, by Lemma 1 we have,
\[
\min_{\tau_\text{ID}} \text{SR}(\tau_\text{OOD}, \tau_\text{ID})
\;\;\geq\;\;
\min_{\tau_\text{OOD}, \tau_\text{ID}} \text{SR}(\tau_\text{OOD}, \tau_\text{ID}).
\]
Taking expectation over coverage, it follows that
\[
\text{DS-AURC}(m, s_\text{OOD}, s_\text{ID})
\;\;\leq\;\;
\text{AURC}(m, s).
\]
\end{proof}

The inequalities above are tight when the fixed thresholds 
$(\tau'_\text{OOD},\tau_\text{ID})$ or $(\tau_\text{OOD},\tau'_\text{ID})$ 
coincide with the global optima of the joint problem. Otherwise, the inequalities are strict. This shows that:
\begin{itemize}
    \item DS-F1 never performs worse than the standard F1, and can be strictly higher if ID and OOD thresholds interact non-trivially.
    \item DS-AURC never performs worse (i.e., never larger) than AURC, and can be strictly smaller in practice.
\end{itemize}
Thus, double scoring is a strict generalization of the traditional single-scoring evaluation pipeline. 
In particular, by jointly considering two scores during evaluation, the system can simultaneously account for correct ID predictions, misclassified ID samples, and OOD samples, leading to both lower selective risk and higher F1 score. This demonstrates the practical advantage and improved expressiveness of the double scoring framework.

\section{Performance of SURE+ on Separate Tasks}
\label{app_sec:surep}

In the main paper, we primarily evaluate different training strategies using the proposed metrics, i.e., DS-F1 and DS-AURC. Here, we further provide a more detailed analysis by reporting the conventional metrics separately for OOD detection (AUROC, FPR), ID failure prediction (AURC), and Accuracy. As shown in Table~\ref{tab4:detail_performance}, we observe that the model reliability benefits more from the combination of different training strategies. Specifically, our proposed \textbf{SURE+} consistently achieves competitive or superior results across both tasks, demonstrating its effectiveness in improving model reliability from both the OOD Detection and failure prediction perspectives. 

\section{Choice of ID Confidence Score}
\label{app_choice:id_score}
In the main paper, we use MSP as the default ID confidence score when computing DS-F1 and DS-AURC. To study the impact of ID confidence choice, Table~\ref{tab:id_score_ablation} reports results using different post-hoc scoring methods for ID and OOD samples. In addition to MSP, we include ReAct and VIM 
since they are widely adopted and representative post-hoc methods for OOD detection.

As shown in the table, MSP exhibits stable performance when used as either the ID or OOD score. In contrast, although ReAct and VIM perform well for OOD detection, their performance degrades when used as ID confidence scores, with VIM showing the most severe degradation. This indicates that post-hoc methods optimized for OOD detection may not reliably capture fine-grained confidence differences among ID samples. Therefore, we adopt MSP as the ID score in our main experiments due to its stability in mixed ID and OOD settings. 

\input{tables/table5_app_idscore}

\input{tables/table6_openood}

\section{Full Results based on OpenOOD Settings}\label{section:supp_openood}
We present full results based on OpenOOD settings~\cite{yang2022openood,zhang2023openood} in this section. Unlike the double-scoring selective classification setup in the main paper experiments, where the scoring function was required to separate correctly classified ID samples from both misclassified ID and OOD samples, the OpenOOD setting follows the standard OOD detection protocol, focusing solely on distinguishing ID from ID+OOD data during evaluation. This evaluation is thus simpler but widely adopted in the OOD detection literature.
Presenting these results aligns our evaluation with the established OpenOOD benchmark, verifies the correctness of our implementation, and complements the more challenging double-scoring selective classification experiments reported in the main paper.

Concretely, we report results on ResNet-18~\cite{he2016deep} trained on CIFAR-100, benchmarking a variety of post-hoc OOD detection methods within the unified OpenOOD framework. While we use the same model as in the main paper, the evaluation setting here differs: we adopt AUROC, FPR@95, and AUPR as metrics, following the OpenOOD protocol. Accordingly, the objective is also aligned with OpenOOD, focusing solely on distinguishing ID data from OOD data.

As shown in Table~\ref{tab:supp_openoodr18_color}, we observe that the ranking of the results generally aligns with the results in the more recent OpenOOD 1.5 benchmark~\cite{zhang2023openood}, which demonstrates the reliability of our implementation.

\section{Implementation Details of SURE+}\label{section:sureplus_supp}
SURE+ builds upon SURE~\cite{li2024sure} with several modifications to its components. Specifically, the original SURE loss function consists of a standard cross-entropy (CE) loss, a RegMixup loss~\cite{pinto2022using}, and a CRL loss~\cite{dong2017class}, with a cosine-similarity classifier (CSC) as the prediction head. As described in the main paper, SURE+ removes the CRL loss and replaces the CSC with the linear classifier and introduces an additional RegPixMix loss based on PixMix~\cite{hendrycks2022pixmix}. Furthermore, SURE+ also applies F-SAM~\cite{li2024friendly} and EMA to replace FMFP optimizations~\cite{zhu2023revisiting}, i.e. SAM~\cite{foret2020sharpness} and SWA~\cite{izmailov2018averaging}. We list the implementation details as follows.

\textbf{RegMixup.} 
Given a training sample $(x,y)$ and a randomly shuffled pair $(x',y')$ from the same batch, we sample a mixing coefficient $\kappa \sim \mathrm{Beta}(\alpha,\alpha)$ and construct a mixed input
\[
\tilde{x} = \kappa x + (1-\kappa)x',
\]
The RegMixup loss is defined as a weighted sum of classification losses on the original and shuffled labels:
\[
\mathcal{L}_{\mathrm{RegMixup}} = \kappa \,\mathcal{L}_\text{CE}(m(\tilde{x}),y) + (1-\kappa)\,\mathcal{L}_\text{CE}(m(\tilde{x}),y').
\]
where $m(\cdot)$ denotes the network align with the notation in the main paper and $\mathcal{L}_\text{CE}(\cdot,\cdot)$ is the cross-entropy loss. This formulation is equivalent to training on a soft label $\kappa y + (1-\kappa)y'$, and serves as a regularization term that improves robustness. Specifically, we follow the original RegMixup setting and adopt $\alpha = 10$ in the experiments.

\textbf{RegPixMix.}
PixMix augments each input by stochastically mixing it with an auxiliary image $x_{\mathrm{mix}}$ sampled from a set of texture images~\cite{hendrycks2022pixmix}. Starting from the original image $x$, we optionally apply a random augmentation, and then repeatedly (1–4 times) select either another augmentation of $x$ or $x_{\mathrm{mix}}$ to serve as the mixing partner. A mixing operator $M$ is then drawn from a predefined set of transformations $\mathcal{M}$, and the final mixed image is constructed as
\[
x_{\mathrm{pixmix}} = M(x, x_{\mathrm{mix}}),
\]
The corresponding loss is simply
\[
\mathcal{L}_{\mathrm{RegPixMix}} = \mathcal{L}_\text{CE}(m(x_{\mathrm{pixmix}}),y).
\]

\textbf{F-SAM.}
To further enhance generalization, we employ \emph{Friendly Sharpness-Aware Minimization (F-SAM)}~\cite{li2024friendly} as an optimization strategy. 
F-SAM perturbs the model parameters along the normalized gradient direction
\[
\epsilon_{\mathrm{FSAM}} = \rho_f \cdot \frac{\nabla_\theta \mathcal{L}}{\|\nabla_\theta \mathcal{L}\|_2},
\]
and minimizes a convex combination of the standard and perturbed losses:
\[
\mathcal{L}_{\mathrm{FSAM}} = \gamma \,\mathcal{L}(\theta) + (1-\gamma)\,\mathcal{L}(\theta + \epsilon_{\mathrm{FSAM}}).
\]
This formulation encourages convergence towards flatter minima while mitigating the over-regularization observed in standard SAM~\cite{foret2020sharpness}.

\textbf{EMA.}
We further stabilize training and evaluation with two complementary techniques. 
First, we maintain an \emph{Exponential Moving Average (EMA)} of the model parameters:
\[
\theta_{\mathrm{EMA}} \leftarrow \tau \theta_{\mathrm{EMA}} + (1-\tau)\theta,
\]
where $\tau$ is a momentum coefficient. The EMA parameters $\theta_{\mathrm{EMA}}$ are used for evaluation, providing a smoother trajectory and reducing the variance of predictions. 
Second, following standard practice, We apply \emph{Re-normalized Batch Normalization} (ReBN) after the final training epoch. 
Specifically, we update the batch normalization statistics by running a single forward pass over the entire training set without gradient updates. 
EMA and ReBN jointly improve the stability of learned representations under distribution shifts, leading to more reliable calibration and robustness.

\textbf{Overall Objective.}
The overall training objective of SURE+ is defined as
\[
\mathcal{L}_{\text{SURE+}} = \mathcal{L}_{\mathrm{CE}}(x,y) + \mathcal{L}_{\mathrm{RegMixup}} +  \mathcal{L}_{\mathrm{RegPixMix}}.
\]
The optimization is further regularized by F-SAM, and evaluation is performed with EMA, leading to improved robustness, generalization, and calibration.

%% file: tables/table4_app_sep.tex
\begin{table*}[t]
\centering
\scalebox{1.0}{
\begin{tabular}{l||c|c|c} 
\toprule

\multirow{2}{*}{\textbf{Training strategy}} 
& \multicolumn{1}{c|}{\textbf{OOD Detection}} 
& \textbf{Failure Prediction} 
& \multirow{2}{*}{\textbf{Acc.} $\uparrow$} \\ 
\cmidrule(lr){2-2} \cmidrule(lr){3-3} 
& \textbf{AUROC} $\uparrow$ 
& \textbf{AURC} $\downarrow$ & \\ 
\midrule
\multicolumn{4}{l}{\textit{\textbf{ResNet-18 - Trained on CIFAR-100}}} \\ 
\midrule
Basic                & 80.82{\scriptsize $\pm$0.05} / 79.31{\scriptsize $\pm$0.99}  & 61.64{\scriptsize $\pm$1.46} & 77.32 \\
Mixup               & 80.54{\scriptsize $\pm$0.02} / 79.25{\scriptsize $\pm$0.42}  & 57.15{\scriptsize $\pm$0.83} & 78.47 \\
RegMixup            & \textbf{\textcolor{blue1}{81.11{\scriptsize $\pm$0.44}}} / 76.69{\scriptsize $\pm$0.91} & 51.71{\scriptsize $\pm$1.27} & 79.35 \\
AugMix              & 79.66{\scriptsize $\pm$0.02} / \textbf{\textcolor{blue1}{79.47{\scriptsize $\pm$0.22}}} & 60.45{\scriptsize $\pm$0.18} & 76.98 \\
PixMix              & 79.88{\scriptsize $\pm$0.54} / \textbf{\textcolor{blue3}{81.02{\scriptsize $\pm$1.00}}} & 60.02{\scriptsize $\pm$2.03} & 77.20 \\
CutMix              & 77.66{\scriptsize $\pm$1.49} / 76.49{\scriptsize $\pm$1.87} & 68.68{\scriptsize $\pm$7.06}  & 77.81 \\

\midrule
SURE                & 79.23{\scriptsize $\pm$0.22} / 74.24{\scriptsize $\pm$0.46} & 44.59{\scriptsize $\pm$1.18} & 80.55 \\
- CSC               & 80.50{\scriptsize $\pm$0.09} / 76.42{\scriptsize $\pm$1.32} & 44.66{\scriptsize $\pm$0.76} & 80.36 \\
- CRL               & 80.51{\scriptsize $\pm$0.16} / 77.40{\scriptsize $\pm$0.34} & \textbf{\textcolor{blue1}{43.68{\scriptsize $\pm$1.34}}} & \textbf{\textcolor{blue1}{80.68}} \\
+ SWA $\rightarrow$ EMA  & 80.91{\scriptsize $\pm$0.66} / 77.69{\scriptsize $\pm$0.69} & 43.72{\scriptsize $\pm$2.38} & 80.54 \\
+ SAM $\rightarrow$ FSAM & \textbf{\textcolor{blue3}{81.14{\scriptsize $\pm$0.29}}} / 77.16{\scriptsize $\pm$0.55}  & \textbf{\textcolor{blue3}{43.29{\scriptsize $\pm$0.97}}} & \textbf{\textcolor{blue3}{80.79}} \\
+ RegPixMix(\textbf{SURE+}) & \textbf{\textcolor{blue5}{82.06{\scriptsize $\pm$0.60}}} / \textbf{\textcolor{blue5}{82.95{\scriptsize $\pm$0.49}}}  & \textbf{\textcolor{blue5}{39.44{\scriptsize $\pm$1.99}}} & \textbf{\textcolor{blue5}{81.66}} \\
\midrule
\multicolumn{4}{l}{\textit{\textbf{DINOv3 ViT-L/16 - Fine-tuned on ImageNet-1K}}} \\ 
\midrule
Basic & 81.69 / 90.63 & 28.28 &  86.89 \\
Mixup & 80.54 / 89.44 & 29.91 & 87.01 \\
RegMixup & 80.54 / 88.91 & 28.26 &  87.07 \\
AugMix & 81.77 / 91.04 & 25.84 &  87.72 \\
PixMix & 81.38 / 90.04 & 27.13  &  87.32 \\
CutMix & 80.33 / 89.07 & 27.93 &  87.12 \\
\midrule
SURE & \textbf{\textcolor{blue1}{82.70}} / \textbf{\textcolor{blue1}{91.50}} & \textbf{\textcolor{blue1}{24.30}} &  87.94 \\
- CSC & 82.27 / 91.25 & 24.41 &   \textbf{\textcolor{blue3}{88.04}} \\
- CRL & 82.26 / 91.13 & 24.51 &  \textbf{\textcolor{blue1}{88.02}}  \\
+ SWA $\rightarrow$ EMA & 82.29 / 91.34 & 24.32 &  87.99 \\
+ SAM $\rightarrow$ FSAM & \textbf{\textcolor{blue3}{82.70}} / \textbf{\textcolor{blue3}{92.37}} & \textbf{\textcolor{blue3}{24.16}} & 87.95 \\
+ RegPixMix(\textbf{SURE+}) & \textbf{\textcolor{blue5}{83.05}} / \textbf{\textcolor{blue5}{92.56}} & \textbf{\textcolor{blue5}{22.54}} & \textbf{\textcolor{blue5}{88.49}} \\
\bottomrule
\end{tabular}
}
\caption{Comparison of \textbf{OOD detection} performance (AUROC), \textbf{Failure Prediction} performance (AURC), and \textbf{in-distribution} (ID) classification accuracy (Acc) on CIFAR-100 and ImageNet-1K under different training strategies. All experiments are conducted using MSP as the scoring function. Results on CIFAR-100 are averaged over three runs, with both the mean and standard deviation reported. The top-3 methods for each metric are highlighted using a color gradient from \textbf{\textcolor{blue1}{light blue}} to \textbf{\textcolor{blue5}{dark blue}}.}
\label{tab4:detail_performance}
\end{table*}

%% file: tables/table5_app_idscore.tex
\begin{table}[t]
\centering
\small

\scalebox{.90}{
\begin{tabular}{l||c|c|c}
\toprule
\multirow{2}{*}{OOD Score} & \multicolumn{3}{c}{ID Score} \\
\cmidrule(lr){2-4}
 & MSP~\cite{hendrycks2016baseline} & ReAct~\cite{sun2021react} & VIM~\cite{wang2022vim} \\
\midrule
\multicolumn{4}{l}{\textit{\textbf{DS-F1} $\uparrow$ }} \\
\midrule
MSP~\cite{hendrycks2016baseline}   & 67.42 / 57.03 & 67.42 / 57.03 & 47.25 / 46.61 \\
ReAct~\cite{sun2021react} & 67.43 / 58.44 & 66.30 / 58.02 & 47.21 / 46.60 \\
VIM~\cite{wang2022vim}   & \textbf{67.46 / 58.94} & 59.94 / 55.03 & 47.17 / 46.56 \\
\midrule
\multicolumn{4}{l}{\textit{\textbf{DS-AURC} $\downarrow$ }} \\
\midrule
MSP~\cite{hendrycks2016baseline}   & 202.38 / 367.56 & 202.56 / 368.06 & 673.00 / 684.09 \\
ReAct~\cite{sun2021react} & 199.61 / 331.40 & 213.45 / 335.92 & 673.20 / 678.29 \\
VIM~\cite{wang2022vim}   & \textbf{198.39 / 321.09} & 270.88 / 359.96 & 679.89 / 678.12 \\
\bottomrule
\end{tabular}
}
\caption{Effect of different post-hoc scoring methods used as ID confidence scores on CIFAR-100 with ResNet-18. Rows denote the post-hoc method used for scoring OOD samples, while columns denote the method used for scoring ID samples. Results are reported for Near/Far-OOD settings.}
\label{tab:id_score_ablation}
\end{table}

%% file: tables/table6_openood.tex
\begin{table*}[t]
\centering
\scalebox{1.}{
\begin{tabular}{l||c|c|c} 
\toprule
\multirow{2}{*}{\textbf{Method}} & \multicolumn{3}{l}{\begin{tabular}[c]{@{}l@{}}\textbf{\textbf{Model}}: ResNet-18~\textbf{\textbf{Training strategy}}: Basic\\\textbf{\textbf{\textbf{\textbf{Training/Validation set}}}}: CIFAR-100 training set\\\textbf{\textbf{Evaluation set}}: CIFAR-100 test set + Near/Far OOD sets\end{tabular}} \\ 
\cmidrule{2-4}
 & \textbf{AUROC $\uparrow$} & \textbf{FPR@95$\downarrow$} & \textbf{AUPR$\uparrow$} \\ 
\midrule
MSP~\cite{hendrycks2016baseline} & \textbf{\textcolor{blue1}{80.82}}{\scriptsize $\pm$0.05} / 79.31{\scriptsize $\pm$0.99} & \textbf{\textcolor{blue5}{54.34}}{\scriptsize $\pm$0.60} / 56.64{\scriptsize $\pm$2.33} & \textbf{\textcolor{blue1}{83.99}}{\scriptsize $\pm$2.33} / 66.73{\scriptsize $\pm$1.71} \\
OpenMax~\cite{bendale2016towards} & 72.34{\scriptsize $\pm$0.37} / 73.10{\scriptsize $\pm$0.81} & 55.56{\scriptsize $\pm$0.21} / 59.83{\scriptsize $\pm$1.63} & 80.10{\scriptsize $\pm$1.63} / 62.92{\scriptsize $\pm$1.38} \\
ODIN~\cite{liang2017enhancing} & 80.05{\scriptsize $\pm$0.12} / 79.94{\scriptsize $\pm$0.95} & 58.27{\scriptsize $\pm$0.79} / 57.58{\scriptsize $\pm$2.51} & 82.94{\scriptsize $\pm$2.51} / 67.26{\scriptsize $\pm$1.71} \\
MDS~\cite{lee2018simple} & 68.10{\scriptsize $\pm$0.92} / 74.16{\scriptsize $\pm$0.66} & 74.68{\scriptsize $\pm$1.00} / 68.42{\scriptsize $\pm$1.47} & 71.91{\scriptsize $\pm$1.47} / 57.39{\scriptsize $\pm$1.32} \\
Gram~\cite{hendrycks2021unsolved} & 54.39{\scriptsize $\pm$0.31} / 70.65{\scriptsize $\pm$1.39} & 92.29{\scriptsize $\pm$0.19} / 70.25{\scriptsize $\pm$3.62} & 57.09{\scriptsize $\pm$3.62} / 55.18{\scriptsize $\pm$2.56} \\
EBO~\cite{liu2020energy} & \textbf{\textcolor{blue4}{81.14}}{\scriptsize $\pm$0.15} / \textbf{\textcolor{blue2}{80.59}}{\scriptsize $\pm$1.18} & 54.96{\scriptsize $\pm$0.89} / \textbf{\textcolor{blue2}{55.24}}{\scriptsize $\pm$2.77} & \textbf{\textcolor{blue2}{84.04}}{\scriptsize $\pm$2.77} / \textbf{\textcolor{blue3}{68.25}}{\scriptsize $\pm$1.89} \\
GradNorm~\cite{huang2021importance} & 76.04{\scriptsize $\pm$0.50} / 75.13{\scriptsize $\pm$2.35} & 78.87{\scriptsize $\pm$1.40} / 76.67{\scriptsize $\pm$2.97} & 75.64{\scriptsize $\pm$2.97} / 53.12{\scriptsize $\pm$2.90} \\
ReAct~\cite{sun2021react} & 80.71{\scriptsize $\pm$0.43} / \textbf{\textcolor{blue5}{81.45}}{\scriptsize $\pm$1.11} & 56.73{\scriptsize $\pm$2.29} / \textbf{\textcolor{blue5}{52.05}}{\scriptsize $\pm$2.70} & 83.31{\scriptsize $\pm$2.70} / \textbf{\textcolor{blue5}{69.41}}{\scriptsize $\pm$2.02} \\
MLS~\cite{basart2022scaling} & \textbf{\textcolor{blue5}{81.21}}{\scriptsize $\pm$0.12} / \textbf{\textcolor{blue1}{80.40}}{\scriptsize $\pm$1.07} & {54.88}{\scriptsize $\pm$0.83} / \textbf{\textcolor{blue1}{55.33}}{\scriptsize $\pm$2.71} & \textbf{\textcolor{blue3}{84.07}}{\scriptsize $\pm$2.71} / \textbf{\textcolor{blue2}{68.08}}{\scriptsize $\pm$1.83} \\
KLM~\cite{basart2022scaling} & 74.88{\scriptsize $\pm$0.46} / 73.32{\scriptsize $\pm$0.81} & 78.69{\scriptsize $\pm$4.95} / 82.01{\scriptsize $\pm$2.23} & 75.05{\scriptsize $\pm$2.23} / 50.89{\scriptsize $\pm$0.91} \\
VIM~\cite{wang2022vim} & 73.43{\scriptsize $\pm$0.07} / 79.14{\scriptsize $\pm$0.99} & 63.98{\scriptsize $\pm$0.42} / \textbf{\textcolor{blue4}{54.93}}{\scriptsize $\pm$0.84} & 77.84{\scriptsize $\pm$0.84} / 67.34{\scriptsize $\pm$0.81} \\
KNN~\cite{sun2022out} & 80.51{\scriptsize $\pm$0.12} / \textbf{\textcolor{blue4}{80.93}}{\scriptsize $\pm$0.28} & 59.31{\scriptsize $\pm$0.87} / 57.32{\scriptsize $\pm$1.65} & 81.73{\scriptsize $\pm$1.65} / \textbf{\textcolor{blue1}{67.56}}{\scriptsize $\pm$1.43} \\
DICE~\cite{sun2022dice} & 80.34{\scriptsize $\pm$0.26} / \textbf{\textcolor{blue3}{80.60}}{\scriptsize $\pm$1.40} & 55.97{\scriptsize $\pm$1.03} / \textbf{\textcolor{blue3}{55.03}}{\scriptsize $\pm$3.26} & 83.52{\scriptsize $\pm$3.26} / \textbf{\textcolor{blue4}{68.35}}{\scriptsize $\pm$2.18} \\
SIRC(MSP,$||z||_1$)~\cite{xia2022augmenting} & 80.66{\scriptsize $\pm$0.03} / 79.35{\scriptsize $\pm$1.06} & \textbf{\textcolor{blue2}{54.45}}{\scriptsize $\pm$0.53} / 56.78{\scriptsize $\pm$2.36} & 83.94{\scriptsize $\pm$2.36} / 66.68{\scriptsize $\pm$1.72} \\
SIRC(MSP,Res.)~\cite{xia2022augmenting} & 80.75{\scriptsize $\pm$0.05} / 79.11{\scriptsize $\pm$1.12} & \textbf{\textcolor{blue4}{54.41}}{\scriptsize $\pm$0.57} / 56.68{\scriptsize $\pm$2.35} & 83.97{\scriptsize $\pm$2.35} / 66.68{\scriptsize $\pm$1.75} \\
SIRC(-H,$||z||_1$)~\cite{xia2022augmenting} & \textbf{\textcolor{blue3}{81.10}}{\scriptsize $\pm$0.13} / 80.17{\scriptsize $\pm$1.12} & \textbf{\textcolor{blue1}{54.50}}{\scriptsize $\pm$0.56} / 56.37{\scriptsize $\pm$2.47} & \textbf{\textcolor{blue5}{84.20}}{\scriptsize $\pm$2.47} / 67.39{\scriptsize $\pm$1.79} \\
SIRC(-H,Res.)~\cite{xia2022augmenting} & \textbf{\textcolor{blue2}{81.07}}{\scriptsize $\pm$0.11} / 79.80{\scriptsize $\pm$1.15} & \textbf{\textcolor{blue3}{54.43}}{\scriptsize $\pm$0.65} / 56.27{\scriptsize $\pm$2.43} & \textbf{\textcolor{blue4}{84.18}}{\scriptsize $\pm$2.43} / 67.35{\scriptsize $\pm$1.81} \\
\midrule
\textbf{\textbf{ID Acc.}} & \multicolumn{1}{l}{} & \multicolumn{1}{c}{77.32} &  \\
\bottomrule
\end{tabular}
}
\caption{\textbf{Evaluation on ResNet-18 based on OpenOOD settings.} We report each metric on both Near- and Far-OOD tests. All experiments were run three times, and both the average and the standard deviation are presented. The top-5 methods for each metric are highlighted using a color gradient from \textbf{\textcolor{blue1}{light blue}} to \textbf{\textcolor{blue5}{dark blue}}.}
\label{tab:supp_openoodr18_color}
\end{table*}